\title{Poisson noise reduction with non-local PCA}
\author{Joseph Salmon \and Zachary Harmany \and Charles-Alban Deledalle \and Rebecca Willett  }
\institute{ Joseph Salmon \at
Department LTCI, CNRS UMR 5141, Telecom Paristech\\ 
Paris, France\\
\email{joseph.salmon@telecom-paristech.fr}
\and
Zachary Harmany \at
Department of Electrical and Computer Engineering\\
University of Wisconsin-Madison\\
Madison, Wisconsin, USA\\
\email{harmany@wisc.edu}
\and
Rebecca Willett \at
  Department of Electrical and Computer Engineering\\
  Duke University\\
  Durham, NC, USA.\\
  \email{willett@duke.edu}
  \and
  Charles-Alban Deledalle \at
  IMB, CNRS-Universit\'{e} Bordeaux 1\\
  Talence, France\\
  \email{charles-alban.deledalle@math.u-bordeaux1.fr}  
}
\date{Received: date / Accepted: date}
\newcommand{\currentcaption}{}
\newcommand{\currentname}{}
\definecolor{darkgreen}{rgb}{.0,.638,.035}
\definecolor{darkred}{rgb}{.638,.0,.035}
\definecolor{purple}{rgb}{0.4,.1,.9}
\definecolor{badgerred}{RGB}{183,01,01}
\newcommand{\ie}{{\em i.e.,~}}
\newcommand{\eg}{{\em e.g.,~}}
\newcommand{\lcf}{{\em cf.~}}
\newcommand{\argmin}{\mathop{\mathrm{arg\,min}}}
\DeclareMathOperator{\PSNR}{PSNR}
\DeclareMathOperator{\MAE}{MAE}
\newcommand{\wh}[1]{\widehat{#1}}
\newcommand*{\wpatch}{%
 \protect \@ifnextchar\bgroup\w@patch\w@@patch
}
\newcommand*{\w@patch}[1]{
  \ensuremath{\boldsymbol S \left( #1 \right)}%
}
\newcommand*{\w@@patch}{
  \ensuremath{\boldsymbol S}%
}
\newcommand*{\wpatchoppose}{%
 \protect \@ifnextchar\bgroup\my@patch\my@@patch
}
\newcommand*{\my@patch}[1]{
  \ensuremath{\check{(\boldsymbol S}\left( #1 \right)}%
}
\newcommand*{\my@@patch}{
  \ensuremath{\check{\boldsymbol S}}%
}
\DeclareMathOperator{\pen}{Pen}
\def\R{\mathbb{R}}
\def\N{\mathbb{N}}
\newcommand{\1}{\mathds{1}} 
\def\P{\mathbb{P}}
\def\E{\mathbb{E}}
\DeclareMathOperator{\Poisson}{Poisson}
\DeclareMathOperator{\Tr}{Tr}
\DeclareMathOperator{\diag}{diag}
\DeclareMathOperator{\Vect}{Vect}
\DeclareMathOperator{\sign}{sign}
\newcommand{\etaST}{{\mathbf{\eta}}_\text{\tiny ST}}
\DeclareMathOperator{\iter}{iter}
\DeclareMathOperator{\stoping}{stop}
\DeclareMathOperator{\cond}{cond}
\DeclareMathOperator{\test}{test}
\DeclareMathOperator{\randn}{randn}
\begin{document}
\sloppy 

\maketitle

\begin{abstract}
	Photon-limited imaging arises when the number of photons collected by
	a sensor array is small relative to the number of detector elements.
	Photon limitations are an important 
	concern for many applications such as spectral imaging, night vision,
	nuclear medicine, and astronomy. 
  Typically a Poisson
  distribution is used to model these observations, and the inherent
  heteroscedasticity of the data combined with standard noise removal
  methods yields significant artifacts. This paper introduces a novel
  denoising algorithm for photon-limited images which combines
  elements of dictionary learning and sparse patch-based representations of images. 
  The method employs both an adaptation of Principal
  Component Analysis (PCA) for Poisson noise and recently developed
  sparsity-regularized convex optimization algorithms for photon-limited images.
  A comprehensive empirical evaluation of the proposed
  method helps characterize the performance of this approach relative
  to other state-of-the-art denoising methods.  The results reveal
  that, despite its conceptual simplicity, Poisson PCA-based denoising appears to be
  highly competitive in very low light regimes.

\keywords{Image denoising \and PCA \and Gradient methods \and Newton's method \and Signal representations}

\end{abstract}

%
\section{Introduction, model, and notation}
\label{sec:intro}

In a broad range of imaging applications, observations correspond to
counts of photons hitting a detector array, and these counts can be
very small. For instance, in night vision, infrared, and certain
astronomical imaging systems, there is a limited amount of available
light. Photon limitations can even arise in well-lit environments when
using a spectral imager which characterizes the wavelength of each
received photon. The spectral imager produces a three-dimensional data
cube, where each voxel in this cube represents the light intensity at
a corresponding spatial location and wavelength. As the spectral
resolution of these systems increases, the number of available photons
for each spectral band decreases. Photon-limited imaging algorithms
are designed to estimate the underlying spatial or spatio-spectral
intensity underlying the observed photon counts.

There exists a rich literature on image estimation or denoising
methods, and a wide variety of effective tools.  The photon-limited
image estimation problem is particularly challenging because
the limited number of available photons introduces intensity-dependent
Poisson statistics which require specialized algorithms and analysis
for optimal performance.  Challenges associated with low photon count
data are often circumvented in hardware by designing systems which
aggregate photons into fixed bins across space and wavelength (\ie
creating low-resolution cameras). If the bins are large enough, the
resulting low spatial and spectral resolution cannot be overcome. High-resolution
observations, in contrast, exhibit significant non-Gaussian noise
since each pixel is generally either one or  zero  (corresponding to whether 
or not a photon is counted by the detector), and
conventional algorithms which neglect the effects of photon noise will
fail. Simply transforming Poisson data to produce data with approximate Gaussian
noise (via, for instance, the variance stabilizing Anscombe transform
\cite{Anscombe48,Makitalo_Foi11} or Fisz transform
\cite{Fisz55,Fryzlewicz_Nason01}) can be effective when the number photon counts are uniformly high
\cite{Boulanger_Kervrann_Bouthemy_Elbau_Sibarita_Salamero10,Zhang_Fadili_Starck08}. 
However, when photon counts are very low these approaches may suffer, as
shown later in this paper. 

This paper demonstrates how advances in low-dimensional modeling and
sparse Poisson intensity reconstruction algorithms can lead to
significant gains in photon-limited (spectral) image accuracy at the
resolution limit. The proposed method combines Poisson Principal
Component Analysis (Poisson-PCA -- a special case of the
Exponential-PCA
\cite{Collins_Dasgupta_Schapire02,Singh_Gordon08}) 
and sparse Poisson intensity estimation methods
\cite{Harmany_Marcia_Willett12} in a non-local estimation framework.
We detail the
targeted optimization problem which incorporates the heteroscedastic
nature of the observations and present results improving upon
state-of-the-art methods when the noise level is particularly high. We coin our method Poisson Non-Local Principal
Component Analysis (Poisson NLPCA).

Since the introduction of non-local methods for image denoising
\cite{Buades_Coll_Morel05}, these methods have proved to outperform
previously considered approaches
\cite{Aharon_Elad_Bruckstein06,Dabov_Foi_Katkovnik_Egiazarian07,Mairal_Bach_Ponce_Sapiro_Zisserman09,Dabov_Foi_Katkovnik_Egiazarian09}
(extensive comparisons of recent denoising method can be found for Gaussian noise 
in \cite{Katkovnik_Foi_Egiazarian_Astola10,Lebrun_Colom_Buades_Morel12}).	
Our work is inspired by recent methods combining PCA with patch-based approaches
\cite{Muresan_Parks03,Zhang_Dong_Zhang_Shi10,Deledalle_Salmon_Dalalyan11}
for the Additive White Gaussian Noise (AWGN) model, with natural extensions to
spectral imaging \cite{Danielyan_Foi_Katkovnik_Egiazarian10}. A major
difference between these approaches and our method is that we directly
handle the Poisson structure of the noise, without any
``Gaussianization'' of the data. Since our method does not use
a quadratic data fidelity term, the singular value decomposition (SVD) cannot
be used to solve the minimization. Our direct approach is particularly
relevant when the image suffers from a high noise level (\ie low
photon emissions). 

\subsection{Organization of the paper} In Section
\ref{sec:formulation}, we describe the mathematical framework.  In
Section \ref{sec:expo}, we recall relevant basic properties of the
exponential family, and propose an optimization formulation for
matrix factorization. Section \ref{sec:newton} provides an algorithm
to iteratively compute the solution of our minimization problem.  In Section
\ref{sec:clustering}, an important clustering step is introduced both
to improve the performance and the computational complexity of our algorithm.
Algorithmic details and experiments are reported in Section
\ref{sec:algorithm} and \ref{sec:experiments}, and we conclude in
Section \ref{seq:conclusion}.


%


\subsection{Problem formulation}\label{sec:formulation}
For an integer $M>0$, the set $\{1,\ldots\!,M \}$ is denoted
$\llbracket 1,M \rrbracket$.  For $i \in \llbracket 1,M \rrbracket$,
let $y_i$ be the observed pixel values obtained through an image
acquisition device. We consider each $y_i$ to be an independent random
Poisson variable whose mean $f_i\geq0$ is the underlying intensity
value to be estimated. Explicitly, the discrete Poisson probability of
each $y_i$ is 
\begin{align}
\P(y_i|f_i)= 
	\displaystyle\frac{f_i^{y_i}}{y_i !}e^{-f_i},
\end{align}
where $0!$ is understood to be 1 and $0^0$ to be $1$.

A crucial property of natural images is their ability to be accurately
represented using a concatenation of patches, each of which is a
simple linear combination of a small number of representative
atoms. One interpretation of this property is that the patch
representation exploits self-similarity present in many images, as
described in AWGN settings
\cite{Dabov_Foi_Katkovnik_Egiazarian07,Mairal_Bach_Ponce_Sapiro_Zisserman09,Dabov_Foi_Katkovnik_Egiazarian09}.
Let $Y$ denote the $M\times N$ matrix of all the vectorized $\sqrt{N}
\times \sqrt{N}$ overlapping patches (neglecting border issues)
extracted from the noisy image, and let $F$ be defined similarly for
the true underlying intensity. Thus $Y_{i,j}$ is the $j$th pixel in the
$i$th patch.

Many methods have been proposed
to represent the collection of patches  
in a low dimensional space in the same spirit
as PCA.   
We use the framework considered in
\cite{Collins_Dasgupta_Schapire02,Singh_Gordon08}, that deals with data 
well-approximated by random variables drawn from exponential family
distributions. In particular, we use Poisson-PCA, which we briefly
introduce here before giving more details in the next section. 
With Poisson-PCA, one aims to approximate $F$ by:
\begin{equation}
	F_{i,j} \approx \exp ( [UV] _{i,j} )
  \quad \forall  (i,j) \in \llbracket 1,M \rrbracket  \times 
  \llbracket 1,N \rrbracket,
  \vspace{-0.2cm}
\label{eq:expApprox}
\end{equation}where
\begin{itemize}
\item $U$ is the $M \times \ell$ matrix of coefficients;
\item $V$ is the $\ell \times N$ matrix representing the dictionary
  components or axis. The rows of $V$ represents the dictionary
  elements; and
  \item  $\exp(UV)$ is the element-wise exponentiation of $UV$:
    $\exp\big([UV]_{i,j}\big) := \big[\exp(UV)\big]_{i,j}$.
\end{itemize}
The approximation in \eqref{eq:expApprox} is different than the
approximation model used in similar methods based on AWGN, where
typically one assumes $F_{i,j} \approx [UV]_{i,j}$ (that is, without
exponentiation). Our exponential model allows us to circumvent
challenging issues related to the nonnegativity of $F$ and thus
facilitates significantly faster algorithms.

The goal is to compute an estimate of the form \eqref{eq:expApprox}
from the noisy patches $Y$.  We assume that this approximation is
accurate for $\ell \ll M$, whereby restricting the rank $\ell$ acts to regularize the solution.  In the following section we elaborate on this low-dimensional representation.



\section{Exponential family and matrix factorization}\label{sec:expo}

We present here the general case of matrix factorization for an
exponential family, though in practice we only use this framework for
the Poisson and Gaussian cases.  We describe the idea for a general exponential family because our proposed 
method considers Poisson noise, but we also develop an analogous  method (for comparison purposes) based on 
an Anscombe transform of the data and a Gaussian noise model.
 The solution we focus on
follows the one introduced by \cite{Collins_Dasgupta_Schapire02}.
Some more specific details can be found in
\cite{Singh_Gordon08,Singh_Gordon08b} about matrix factorization for
exponential families.

\subsection{Background on the exponential family}
We assume that the observation space $\mathcal{Y}$ is equipped with a
$\sigma$-algebra $\mathcal{B}$ and a dominating $\sigma$-finite measure $\nu$ on $(\mathcal{Y}, \mathcal{B})$.
Given a positive integer $n$, let $\phi$: $\mathcal{Y} \rightarrow \R^n $ be a measurable
function, and let $\phi_k$, $k = 1, 2,\cdots,n$ denote its components:
$\phi(y)= \big( \phi_1(y), \cdots,\phi_n(y) \big)$.

Let $\Theta$ be defined as the set of all $\theta \in \R^n$ such that
$\int_{\mathcal{Y} }\exp( \langle \theta  |  \phi (y) \rangle ) d\nu < \infty $. We assume
it is convex and open in this paper.
We then have the following definition:

\begin{definition}
  An {\em exponential family with sufficient statistic $\phi$} is the
  set $\mathcal{P}(\phi)$ of probability distributions w.r.t.\ the measure $\nu$ on
  $(\mathcal{Y}, \mathcal{B})$  parametrized by $\theta \in \Theta$,
  such that
 each probability density function $p_\theta \in
  \mathcal{P}(\phi)$  can be expressed as
\begin{equation}
p_\theta(y) =\exp \left\{ \langle \theta  |  \phi (y) \rangle -\Phi (\theta)\right\},
\end{equation}
where 
\begin{equation}
\Phi (\theta) =\log \int_{\mathcal{Y} } \exp \left\{ \langle \theta  |  \phi (y) \rangle \right\} d\nu(y).
\end{equation}
The parameter $\theta \in \Theta$ is called the {\em natural parameter} of
$\mathcal{P}(\phi)$, and the set $\Theta$ is called the {\em natural
parameter space}. The function $\Phi$ is called the {\em log partition
function}.  We denote by $\E_\theta[\cdot]$ the expectation
w.r.t.\ $p_\theta$:
\begin{align*}
\E_\theta[g(X)]= \int_{\mathcal{X}} g(y)\left( \exp( \langle \theta  |  \phi (y) \rangle ) -\Phi(\theta) \right) d\nu (y).
\end{align*}
\end{definition}

\begin{example}
Assume the data are independent (not necessarily identically distributed) 
Gaussian random variables with means $\mu_i$ and  (known) variances $\sigma^2$.
Then the parameters are:
$\forall y \in \R^n, \phi(y)=y$,
$\Phi(\theta)=\sum_{i=1}^n \theta_i^2/2 \sigma^2$ and $\nabla\Phi(\theta)=(\theta_1/\sigma^2,\cdots,\theta_n/\sigma^2)$ and $\nu$ is 
the Lebesgue measure on $\R^n$
(\lcf \cite{Nielsen_Garcia09} for more 
details on the Gaussian distribution, possibly with non-diagonal covariance matrix).


\end{example}

\begin{example}
For Poisson distributed data (not necessarily identically distributed), 
the parameters are the following:
$\forall y \in \R^n, \phi(y)=y$, and  
$\Phi(\theta)=\langle  \exp(\theta) | \1_n \rangle=\sum_{i=1}^n e^{\theta_i}$,  where
$\exp$ is the component-wise  exponential function:
\begin{equation}\label{eq:EXP}
 \exp:(\theta_1,\cdots,\theta_n) \mapsto(e^{\theta_1},\cdots,e^{\theta_n}),
\end{equation} 
and 
$\1_n$ is the vector $(1,\cdots,1)^\top \in \R^n$. Moreover
$
\nabla\Phi(\theta)=\exp(\theta)
$
and $\nu$ is the counting measure on $\N$ weighted by $e/n!$.
\end{example}

\begin{remark}
The standard parametrization is usually different for Poisson distributed data,
and this family is often parametrized by  the rate parameter $f=\exp (\theta)$.
\end{remark}

\subsection{Bregman divergence }
The general measure of proximity we use in our analysis relies on Bregman
divergence \cite{Bregman67}. For exponential families, 
the relative entropy (Kullback-Leibler divergence) between 
$p_{\theta_1}$ and $p_{\theta_2}$ in $\mathcal{P}(\phi)$, defined as 
\begin{equation} 
D_\Phi(p_{\theta_1} ||  p_{\theta_2}) =
\int_{\mathcal{X}} p_{\theta_1} \log(p_{\theta_1} /p_{\theta_2} )d\nu, 
\end{equation}
can be simply written as a function of the natural parameters:
\begin{align*}
D_\Phi(p_{\theta_1} ||  p_{\theta_2}) = \Phi(\theta_2 ) - \Phi(\theta_1) -
\langle \nabla \Phi(\theta_1 ) | \theta_2 - \theta_1 \rangle .
\end{align*}
From the last equation, we have that 
the mapping $D_\Phi : \Theta \times \Theta \rightarrow \R$, defined
by $D_\Phi (\theta_1, \theta_2 ) = D_\Phi(p_{\theta_2}|| p_{\theta_2} )$, is a Bregman divergence.

\begin{example}\label{example:Gaussian}
For Gaussian distributed observations with unit variance and zero mean, the Bregman divergence can be written: 
\begin{equation}
D_G (\theta_1, \theta_2 )=\|\theta_1 - \theta_2 \|_2^2.
\label{eq:bregman_gaussian}
\end{equation}
\end{example}

\begin{example}\label{example:poisson}
For Poisson distributed observations, the Bregman divergence can be written:
\begin{equation}
D_P (\theta_1, \theta_2 )=\langle  
\exp(\theta_2)-\exp(\theta_1) | \1_n \rangle - \langle  \exp (\theta_1)| \theta_2-\theta_1 \rangle .
\label{eq:bregman_poisson}
\end{equation} 
\end{example}

We  define the matrix Bregman divergence as
\begin{multline}
D_\Phi(X|| Y)=\Phi(Y) - \Phi(X) \\-
\Tr \Big(  (\nabla \Phi(X))^\top (X -Y)  \Big), \label{eq:breg}
\end{multline}
for any (non necessarily square)  matrices $X$ and $Y$ of size $M \times N$.

\subsection{Matrix factorization and dictionary learning}

Suppose that one observes $Y\in \R^{M\times N}$, and let $Y_{i,:}$
denote the $i$th patch in row-vector form.  We would like to
approximate the underlying intensity $F$ by a combination of some
vectors, atoms, or dictionary elements $V=[v_1,\cdots,v_\ell]$, where each
patch uses different weights on the dictionary elements. In other
words, the $i$th patch of the true intensity, denoted $F_{i,:}$, is
approximated as $\exp(u_i V)$, where $u_i$ is the $i$th row of $U$ and
contains the dictionary weights for the $i$th patch.  Note that we perform this factorization in the natural parameter space, which is why we use the exponential function in the
formulation given in Eq. \eqref{eq:expApprox}.


Using the divergence defined in \eqref{eq:breg}
our objective is to find $U$ and $V$ minimizing the following criterion:
\begin{align*}
D_\Phi(Y||UV)
=& \sum_{j=1}^M \Phi(u_j V)-  Y_{j,:}  -
\langle Y_{j,:} |   u_j V -Y_{j,:}  \rangle \, .
\end{align*}

%
%
In the Poisson case, the framework introduced in \cite{Collins_Dasgupta_Schapire02,Singh_Gordon08} uses the Bregman divergence in Example \ref{example:poisson} and amounts to minimizing 
the following loss function 
\begin{equation}\label{eq:original_poisson_KL}
L(U,V)= \sum_{i=1}^M \sum_{j=1}^N  \exp(UV)_{i,j} -Y_{i,j} (UV)_{i,j}
\end{equation}
with respect to the matrices  $U$ and $V$. 
Defining the corresponding minimizers of the biconvex problem
\begin{equation}\label{eq:mini_program_poisson}
(U^*,V^*) \in \argmin_{(U,V) \in \R^{M\times \ell} \times \R^{\ell\times N }}L(U,V) \,,\vspace{-0.2cm}
\end{equation}
our image intensity estimate is
\begin{equation}
 \wh{F}=\exp (U^* V^*) \, . 
\end{equation}
This is what we call Poisson-PCA (of order $\ell$) in the remainder of
the paper.

\begin{remark}
  The classical PCA (of order $\ell$) is obtained using the Gaussian
  distribution, which leads to solving the same minimization as in
  Eq.~\eqref{eq:mini_program_poisson}, except that $L$ is replaced by
 \begin{equation*}
\tilde{L}(U,V)= \sum_{i=1}^M \sum_{j=1}^N  \left( (UV)_{i,j} -Y_{i,j} \right)^2 \, . 
\end{equation*}
\end{remark}


\begin{remark}
  The problem as stated is non-identifiable, as scaling the dictionary
  elements and applying an inverse scaling to the coefficients would
  result in an equivalent intensity estimate.  Thus, one should
  normalize the dictionary elements so that the coefficients cannot be
  too large and create numerical instabilities.  The easiest solution
  is to impose that the atoms $v_i$ are normalized w.r.t.\ the
  standard Euclidean norm, \ie for all $i\in \{1,\cdots,\ell\}$ one ensures
  that the constraint $\|v_i\|_2^2=\sum_{j=1}^n V_{i,j}^2=1$ is
  satisfied.  In practice though, relaxing this constraint modifies
  the final output in a negligible way while helping to keep the computational complexity low.
\end{remark}

\section{Newton's method for minimizing $L$} \label{sec:newton}

Here we follow the approach proposed by
\cite{Gordon03,Roy_Gordon_Thrun05} that consists in using Newton steps
to minimize the function $L$. Though $L$ is not jointly
convex in $U$ and $V$, when fixing one variable and keeping the other
fixed the partial optimization problem is convex (\ie the problem is
biconvex). Therefore we consider Newton updates on the partial
problems.  To apply Newton's method, one needs to invert the Hessian matrices
with respect to both  $U$ and $V$, defined by
$H_U=\nabla_{U}^2 L(U,V)$ and $H_V=\nabla_{V}^2 L(U,V)$. Simple
algebra leads to the following closed form expressions for the
components of these matrices (for notational simplicity 
we use pixel coordinates to index the entries of the Hessian):
\begin{equation*}
\frac{\partial^2 L(U,V)}{\partial U_{a,b} \partial U_{c,d}} =\left\{
    \begin{array}{ll}
        \displaystyle\sum_{j=1}^N  \exp(UV)_{a,j} V^2_{b,j}, & \mbox{if } (a,b)=(c,d), \\
        0 & \mbox{otherwise,}
    \end{array}
\right. 
\end{equation*}
and
\begin{equation*}
\frac{\partial^2 L(U,V)}{\partial V_{a,b} \partial V_{c,d}}=\left\{
    \begin{array}{ll}
        \displaystyle\sum_{i=1}^M  U^2_{i,a} \exp(UV)_{i,b}, &\mbox{if } (a,b)=(c,d), \\
        0&\mbox{otherwise,}
    \end{array}
\right. 
\end{equation*}
where both partial Hessians can be represented as diagonal matrices (\lcf Appendix~\ref{app:hess} for more details).
  
We propose to update the rows of $U$ and columns of $V$ as proposed in
\cite{Roy_Gordon_Thrun05}. We introduce the function $\Vect_C$ that
transforms a matrix into one single column (concatenates the columns),
and the function $ \Vect_R$ that transforms a matrix into a single row
(concatenates the rows). Precise definitions are given in 
Appendix~\ref{app:newton}.  The updating step for $U$ and $V$ are then given
respectively by
\begin{equation*}
\Vect_R(U_{t+1})=\Vect_R(U_t) - \Vect_R \big(\nabla_{U}L(U_t,V_t)\big)H_{U_t}^{-1} 
\end{equation*}
and
\begin{equation*}
 \Vect_C(V_{t+1})=\Vect_C(V_t) -H_{V_t}^{-1} \Vect_C \big(\nabla_{V}L(U_t,V_t)\big) \, .\label{eq:vectru}
\end{equation*}
Simple algebra (\lcf Appendix~\ref{app:newton} or  \cite{Gordon03} for more details) leads to the 
following updating rules for the $i$th row of $U_{t+1}$ (denoted $U_{t+1,i,:}$): 
\begin{equation}\label{eq:update_row_U_poisson}
U_{t+1,i,:}=U_{t,i,:} -(\exp(U_tV_t)_{i,:} -Y_{i,:})V_t^\top (V_t D_{i} V_t^\top)^{-1}~,
\end{equation}
where $D_{i}=\diag \big( \exp(U_t V_t)_{i,1},\ldots, \exp(U_tV_t)_{i,N}  \big)$
is a diagonal matrix of size  $N\times N$. The updating rule for $V_{t,:,j}$, the $j$th column  of $V_t$,  is computed 
in a similar way, leading to 
\begin{multline}\label{eq:update_row_V_poisson}
V_{t+1,:,j}=
V_{t,:,j} - \\ (U_{t+1} ^\top  E_{j} U_{t+1} )^{-1} U_{t+1}^\top (\exp(U_{t+1}V_t)_{:,j} -Y_{:,j}),
\end{multline}
where $E_{j}=\diag \big( \exp(U_{t+1} V_t)_{1,j},\ldots,
\exp(U_{t+1}V_t)_{M,j} \big)$ is a diagonal matrix of size $M\times
M$. More details about the implementation are given in Algorithm
\ref{alg:pseudo-code}.

\begin{algorithm}
\centering
\begin{minipage}{\linewidth}
\begin{small}
\begin{algorithmic}[0]
  \State \textbf{Inputs:} Noisy pixels $y_i$ for $i=1,\dots,M$
  \State \textbf{Parameters:} Patch size $\sqrt{N} \times \sqrt{N}$, number of clusters $K$, number of components $\ell$,
maximal number of iterations $N_{\iter}$
  \State \textbf{Output:} estimated image $\wh{f}$ 
 \State \textbf{Method:} 
 \State Patchization: create  the collection of patches for the  noisy image $Y$
  \State Clustering: create $K$ clusters  of patches using K-Means
\State The $k$th cluster (represented by a matrix $Y^k$) has $M_k$ elements
  \ForAll {cluster $k$ }
\State Initialize $U_0=\randn(M_k,\ell)$ and   $V_0=\randn(\ell,N)$
\While{$t \leq N_{\iter} $ and $\test > \varepsilon_{\stoping}$}
    \ForAll {$i \leq M_k $}
  \State Update  the $i$th row of $U$ using \eqref{eq:update_row_U_poisson}  or \eqref{eq:update_row_U_poisson_l1}-\eqref{eq:update_sparseU}
    \EndFor
    \ForAll {$j \leq \ell $}
  \State Update  the $j$th column of $V$ using \eqref{eq:update_row_V_poisson}
    \EndFor
  \State $t:=t+1$
   \EndWhile
\State $\wh{F}^k=\exp(U_t V_t)$  
\EndFor
\State Concatenation: fuse the collection of denoised patches $\wh{F}$
\State Reprojection: average the various pixel estimates due to overlaps to get an image estimate: $\wh{f}$
\end{algorithmic}
\caption{Poisson NLPCA/ NLSPCA\label{alg:pseudo-code}}
\end{small}
\end{minipage}
\end{algorithm}


\section{Improvements through $\ell_1$ penalization }
%
%

A possible alternative to minimizing
Eq. \eqref{eq:original_poisson_KL}, consists of minimizing a penalized
version of this loss, whereby a sparsity constraint is imposed on the
elements of $U$ (the dictionary coefficients).  Related ideas have
been proposed in the context of sparse PCA
\cite{Zou_Hastie_Tibshirani06}, dictionary learning
\cite{Lee_Battle_Raina_Ng07}, and matrix factorization
\cite{Mairal_Bach_Ponce_Sapiro_Zisserman09,Mairal_Bach_Ponce_Sapiro10} in the Gaussian case.  Specifically,
we minimize
\begin{align}\label{eq:poisson_constrained_KL}
L^{\pen}(U,V)=L(U,V)+\lambda \pen(U), 
\end{align}
where $\pen(U)$ is a penalty term that ensures we use only a few
 dictionary elements to represent each patch. The parameter $\lambda$ controls the
trade-off between data fitting and sparsity. We focus on the following
penalty function:
\begin{equation}
 \pen(U)=\displaystyle \sum_{i,j} |U_{i,j}|
\end{equation}
We refer to the method as the Poisson Non-Local Sparse PCA (NLSPCA).



The algorithm proposed in \cite{Mairal_Bach_Ponce_Sapiro10} can be
adapted with the SpaRSA step provided in
\cite{Wright_Nowak_Figueiredo09}, or in our setting by using its
adaptation to the Poisson case -- SPIRAL 
\cite{Harmany_Marcia_Willett12}. First one should note that the
updating rule for the dictionary element, \ie Equation
\eqref{eq:update_row_V_poisson}, is not modified. Only the coefficient
update, \ie Equation \eqref{eq:update_row_U_poisson} is modified as
follows:
\begin{equation}\label{eq:update_row_U_poisson_l1}
 U_{t+1,:}=\argmin_{u \in  \R^\ell}  \langle \exp(u V_{t}) | \1 \rangle -\langle u V_{t} | Y_{t+1,:} \rangle + \lambda \| u\|_1.
\end{equation}
For this step, we use the SPIRAL approach. This leads to the following updating 
rule for the coefficients:
\begin{equation}
\begin{aligned}
\label{eq:quadratic_approx}
U_{t+1,:}= & \argmin_{z \in  \R^\ell }  && \frac{1}{2} \| z- \gamma_t \|_2^2+ \frac{\lambda}{\alpha_t} \| z\|_1,& \\
&\text{subject to} &&\gamma_t =  U_{t,:} - \frac{1}{\alpha_t} \nabla_U f (U_{t,:}) .&
\end{aligned}
\end{equation}
where $\alpha_t>0$ and the function $f$ is defined by
\begin{equation*}
f (u)=\langle \exp(u V_{t}) | \1 \rangle -\langle u V_{t} | Y_{t+1,:} \rangle . 
\end{equation*}
The gradient can thus be expressed as
\begin{equation*}
\nabla f (u)= \big( \exp(u V_{t+1}) - Y_{t+1,:} \big) V_{t+1}^\top . 
\end{equation*}
Then the solution of the problem \eqref{eq:quadratic_approx}, is simply
\begin{equation}
U_{t+1,:}=\etaST \left(\gamma_t,\frac{\lambda}{\alpha_t}\right) \label{eq:update_sparseU}
\end{equation} 
where $\etaST$ is the soft-thresholding function 
 $\etaST(x,\tau) =\sign(x) \cdot (|x|-\tau)_+ $.

%
%
%

Other methods than SPIRAL for solving the Poisson $\ell_1$-constrained problem could be investigated, 
\eg  Alternating Direction Method of Multipliers
 (ADMM) algorithms for $\ell_1$-minimization  
(\lcf \cite{Yin_Osher_Goldfarb_Darbon08,Boyd_Parikh_Chu_Peleato_Eckstein11}, or one
specifically adapted to Poisson noise \cite{Figueiredo_BioucasDias10}), though choosing the augmented Lagrangian parameter for these methods can be challenging in practice.

\section{Clustering step}\label{sec:clustering}
Most strategies apply matrix factorization on patches extracted from the entire image.
A finer strategy consists in first performing a clustering step, and then applying
matrix factorization on each cluster.
Indeed, this avoids grouping dissimilar patches of the image, and allows us to represent
the data within each cluster with a lower dimensional dictionary.
This may also improve on the computation time of the dictionary.
In \cite{Dabov_Foi_Katkovnik_Egiazarian07,Mairal_Bach_Ponce_Sapiro_Zisserman09},
the clustering is based on a geometric partitioning of the image.
This improves on the global approach but may results in poor estimation where
the partition is too small. Moreover, this approach remains local and cannot exploit
the redundancy inside similar disconnected regions.
We suggest here using a non-local approach where
the clustering is directly performed in the patch domain similarly to \cite{Chatterjee_Milanfar11}.
Enforcing similarity inside non-local groups of patches
results in a more robust low rank representation of the data,
decreasing the size of the matrices to be factorized, and leading
to efficient algorithms.
Note that in \cite{Deledalle_Salmon_Dalalyan11}, the authors studied an hybrid approach
where the clustering is driven in a hierarchical image domain as well as in the patch domain to
provide both robustness and spatial adaptivity. We have not considered this
approach since, while increasing the computation load,
it yields to significant improvements particularly at low noise levels, which are not the main focus of this paper.

For clustering we have compared two solutions: one using only a simple $K$-means on the original data,
and one performing a Poisson $K$-means.
In similar fashion for adapting PCA for exponential families, the $K$-means clustering
algorithm can also be generalized using Bregman divergences; this is called Bregman clustering \cite{Banerjee_Merugu_Dhillon_Ghosh05}. This
approach, detailed in Algorithm \ref{alg:clustering}, has an EM (Expectation-Maximization) flavor and is
proved to converge in a finite number of steps.

The two variants we consider differ only in the choice of the divergence $d$ used to compare elements $x$ with respect to the centers of the clusters $x_C$:
\begin{itemize}
\item Gaussian: Uses the divergence defined in \eqref{eq:bregman_gaussian}:
\begin{equation*}
d(f,f_C)=D_G(f,f_C)=\|f-f_C\|_2^2.
\end{equation*}

\item Poisson: Uses the divergence defined in \eqref{eq:bregman_poisson}:
\begin{equation*}
d(f,f_C)=D_P(\log(f),\log(f_C))=\sum_j f^j_C -f^j \log(f_C^j)
\end{equation*} 
where the $\log$ is understood element-wise (note that the difference with \eqref{eq:bregman_poisson} is only due to a different parametrization here). 

\end{itemize}
In our experiments, we have used a small number (for instance $K=14$) of clusters fixed in advance.

\begin{algorithm}
\centering
\begin{minipage}{\linewidth}
\begin{small}
\begin{algorithmic}[0]
  \State \textbf{Inputs:} Data points: $(f_i)_{i=1}^M \in \R^N$, number of clusters: $K$, 
Bregman divergence: $d:\R^N\times\R^N \mapsto \R^+$
  \State \textbf{Output:} Clusters centers: $(\mu_k)_{k=1}^K$, partition associated : $(\mathcal{C}_k)_{k=1}^K$
  \State \textbf{Method:}
 \State Initialize $(\mu_k)_{k=1}^K$ by randomly selecting $K$ elements among $(f_i)_{i=1}^M$
\Repeat 
  \State{(\textit{The Assignment step: Cluster updates})}
  \State Set $\mathcal{C}_k:=\emptyset, 1\leq k \leq K$
  \For {$i=1,\cdots,M$}
  \State {$\mathcal{C}_{k^*}:=\mathcal{C}_{k^*} \cup \{f_i\}$}
  \State {where $k^*=\displaystyle\argmin_{k'=1,\cdots,K} d(f_i,\mu_{k'})$}
\EndFor
\State{(\textit{The Estimation step: Center updates})}
\For {$k=1,\cdots, K $}
  \State $\mu_k:=\frac{1}{\# \mathcal{C}_k } \displaystyle\sum_{f_i \in \mathcal{C}_k} f_i$ 
\EndFor
\Until{convergence}
\end{algorithmic}
\caption{Bregman hard clustering \label{alg:pseudo-code-Kmeans}\label{alg:clustering}}
\end{small}
\end{minipage}
\end{algorithm}

In the low-intensity setting we are targeting, clustering on the raw data 
may yield poor results.  
A preliminary
image estimate might be used for performing the clustering, especially if
one has a fast method giving a satisfying denoised image.  For instance,
one can apply the Bregman hard clustering on the denoised images obtained
after having performed the full Poisson NLPCA on the noisy data.  This
approach was the one considered in the short version of this paper
\cite{Salmon_Deledalle_Willett_Harmany12}, where we were using
only the classical $K$-means.  However, we have noticed
that using the Poisson $K$-means instead leads
to  a significant improvement. Thus, the benefit of iterating the clustering is lowered.
In this version, we do not consider such iterative refinement of the
clustering.  The entire algorithm is summarized in Fig.~\ref{fig:patch_pca_architecture}.


%

\begin{figure*}
\centering
\def\svgwidth{2\columnwidth}

\begingroup
  \makeatletter
  \providecommand\color[2][]{%
    \errmessage{(Inkscape) Color is used for the text in Inkscape, but the package 'color.sty' is not loaded}
    \renewcommand\color[2][]{}%
  }
  \providecommand\transparent[1]{%
    \errmessage{(Inkscape) Transparency is used (non-zero) for the text in Inkscape, but the package 'transparent.sty' is not loaded}
    \renewcommand\transparent[1]{}%
  }
  \providecommand\rotatebox[2]{#2}
  \ifx\svgwidth\undefined
    \setlength{\unitlength}{1497.59990234pt}
  \else
    \setlength{\unitlength}{\svgwidth}
  \fi
  \global\let\svgwidth\undefined
  \makeatother
  \begin{picture}(1,0.54680217)%
    \put(0,0){\includegraphics[width=\unitlength]{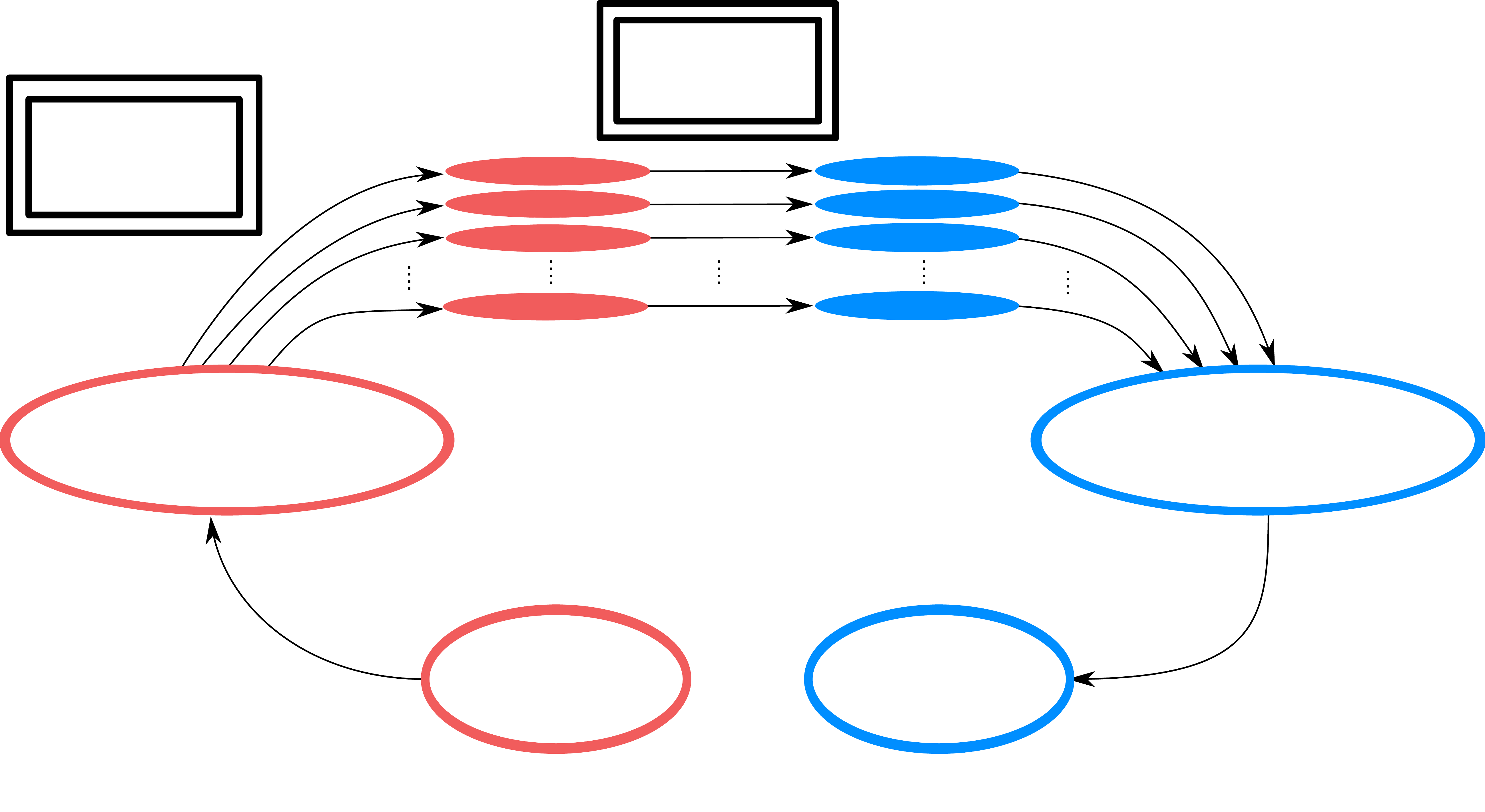}}%
    \put(0.37210023,0.4237047){\color[rgb]{0,0,0}\makebox(0,0)[b]{\smash{$Y^1$}}}%
    \put(0.37210023,0.40233719){\color[rgb]{0,0,0}\makebox(0,0)[b]{\smash{$Y^2$}}}%
    \put(0.85790604,0.38930801){\color[rgb]{0,0,0}\makebox(0,0)[b]{\smash{FUSION}}}%
    \put(0.62210022,0.4237047){\color[rgb]{0,0,0}\makebox(0,0)[b]{\smash{$\wh{F}^1$}}}%
    \put(0.31382832,0.2473803){\color[rgb]{0,0,0}\makebox(0,0)[lb]{\smash{$Y$}}}%
    \put(0.63239208,0.00679336){\color[rgb]{0,0,0}\makebox(0,0)[lb]{\smash{$\wh{f}$}}}%
    \put(0.67519996,0.24751802){\color[rgb]{0,0,0}\makebox(0,0)[lb]{\smash{$\wh{F}$}}}%
    \put(0.31125007,0.00355572){\color[rgb]{0,0,0}\makebox(0,0)[lb]{\smash{$y=\Poisson(f)$}}}%
    \put(0.01942262,0.44416063){\color[rgb]{0,0,0}\makebox(0,0)[lt]{\begin{minipage}{0.14173295\unitlength}\raggedright \centering CLUSTERING\end{minipage}}}%
    \put(0.41348338,0.51420166){\color[rgb]{0,0,0}\makebox(0,0)[lt]{\begin{minipage}{0.13784975\unitlength}\raggedright \centering DENOISING CLUSTERS\end{minipage}}}%
    \put(0.57931496,0.48820269){\color[rgb]{0,0,0}\makebox(0,0)[lt]{\begin{minipage}{0.21368613\unitlength}\raggedright \centering Collections of denoised patches\end{minipage}}}%
    \put(0.18282867,0.48820269){\color[rgb]{0,0,0}\makebox(0,0)[lt]{\begin{minipage}{0.21368613\unitlength}\raggedright \centering Collections of noisy patches\end{minipage}}}%
    \put(0.03234415,0.06660936){\color[rgb]{0,0,0}\makebox(0,0)[lt]{\begin{minipage}{0.2266368\unitlength}\raggedright \centering PATCHIZATION\end{minipage}}}%
    \put(0.75001268,0.06660936){\color[rgb]{0,0,0}\makebox(0,0)[lt]{\begin{minipage}{0.2266368\unitlength}\raggedright \centering REPROJECTION\end{minipage}}}%
    \put(0.01846764,0.26280314){\color[rgb]{0,0,0}\makebox(0,0)[lt]{\begin{minipage}{0.2541209\unitlength}\raggedright \centering Collection of small noisy images (patches) \end{minipage}}}%
    \put(0.71955586,0.26387151){\color[rgb]{0,0,0}\makebox(0,0)[lt]{\begin{minipage}{0.24661485\unitlength}\raggedright \centering Collection of small denoised images (patches) \end{minipage}}}%
    \put(0.3147131,0.10338169){\color[rgb]{0,0,0}\makebox(0,0)[lt]{\begin{minipage}{0.11902993\unitlength}\raggedright \centering Noisy image (pixels)\end{minipage}}}%
    \put(0.57229429,0.10706676){\color[rgb]{0,0,0}\makebox(0,0)[lt]{\begin{minipage}{0.12661253\unitlength}\raggedright \centering Denoised image (pixels)\end{minipage}}}%
    \put(0.3728633,0.33375245){\color[rgb]{0,0,0}\makebox(0,0)[b]{\smash{$Y^K$}}}%
    \put(0.37210023,0.37990129){\color[rgb]{0,0,0}\makebox(0,0)[b]{\smash{$Y^3$}}}%
    \put(0.62210022,0.37990129){\color[rgb]{0,0,0}\makebox(0,0)[b]{\smash{$\wh{F}^3$}}}%
    \put(0.62286329,0.33375245){\color[rgb]{0,0,0}\makebox(0,0)[b]{\smash{$\wh{F}^K$}}}%
    \put(0.62210022,0.40126881){\color[rgb]{0,0,0}\makebox(0,0)[b]{\smash{$\wh{F}^2$}}}%
  \end{picture}%
\endgroup

 \caption{\small Visual summary of our  denoising method. 
In this work we mainly focus on the two highlighted points of the figure: \textbf{clustering }in the context 
of very photon-limited
    data, and  specific \textbf{denoising} method for each cluster.
}
  \label{fig:patch_pca_architecture}
\end{figure*}

\section{Algorithmic details}\label{sec:algorithm}

We now present the practical implementation of our method, for the two variants that 
are the  Poisson NLPCA and the Poisson NLSPCA.





\subsection{Initialization}

We initialize the dictionary at random, drawing the entries from a
standard normal distribution, that we then normalize to have a unit
Euclidean norm. This is equivalent to generating the atoms uniformly
at random from the Euclidean unit sphere.  As a rule of thumb, we also
constrain the first atom (or axis) to be initialized as a constant
vector.  However, this constraint is not enforced during the
iterations, so this property can be lost after few steps.

\subsection{Stopping criterion and conditioning number}

Many methods are proposed in \cite{Wright_Nowak_Figueiredo09} for the stopping criterion.
Here we have used  a criterion  based on the relative 
change in the objective function $L^{\pen}(U,V)$ defined in Eq. \eqref{eq:poisson_constrained_KL}.
This means that we iterate the alternating updates in the algorithm 
as long $\|\exp (U_{t} V_{t})- \exp (U_{t+1} V_{t+1})\|^2/\|\exp (U_{t} V_{t})\|^2 \leq
\varepsilon_{\stoping} $ for some (small) real number $
\varepsilon_{\stoping}$. 

For numerical stability we have added a Tikhonov (or ridge) regularization term. Thus, we
have substituted $ V_t D_{i} V_t^\top$ in Eq.~\eqref{eq:update_row_U_poisson} with $(V_t D_{i} V_t^\top+\varepsilon_{\cond} I_{\ell})$ 
and $(U_t ^\top E_{j} U_t )$ in Eq.~\eqref{eq:update_row_V_poisson} 
 with $(U_t ^\top E_{j} U_t )+\varepsilon_{\cond} I_{\ell})$. For the NLSPCA version the $\varepsilon_{\cond}$ parameter
is only used to update the dictionary in Eq.~\eqref{eq:update_row_V_poisson}, since the regularization on the
coefficients is provided by Eq.~\eqref{eq:update_row_U_poisson_l1}.

\subsection{Reprojections}

Once the whole collection of patches is denoised, it remains to reproject the information onto the pixels. 
Among various solutions proposed in the literature (see for instance \cite{Salmon_Strozecki12} and 
\cite{Dabov_Foi_Katkovnik_Egiazarian07}) the most popular, the one we use in our experiments, is to uniformly 
average all the estimates provided by the patches containing the given pixel.

\subsection{Binning-interpolating}
Following a suggestion of an anonymous reviewer, we have also investigated the following ``binned''
variant of our method:
\begin{enumerate}
 \item aggregate the noisy Poisson pixels into small (for instance $3\times 3$) bins, resulting in a smaller Poisson image with lower resolution but higher counts per pixel;
\item  denoise this binned image using our proposed method;
\item  enlarge the denoised image to the original size using (for instance bilinear) interpolation.
\end{enumerate}
Indeed, in the extreme noise level case we have considered, this approach significantly reduces computation time, and for some images it yields a significant performance increase.
The binning process  allows us to implicitly use larger patches, without facing challenging
memory and computation time issues.
Of course, such a scheme could be applied to any method dealing with low photon counts, and we provide a comparison with the BM3D method (the best overall  competing method) in the experiments section.

%

\section{Experiments}\label{sec:experiments}
We have conducted experiments both on simulated and on real data, on grayscale images (2D) 
and on spectral images (3D). We summarize our
results in the following, both with visual results and performance metrics.

\subsection{Simulated 2D data}
\begin{figure}[h!]
\centering
\begin{tabular}{@{} c@{ }c@{ }c@{ }c @{}} 
\includegraphics[width=0.231\linewidth]{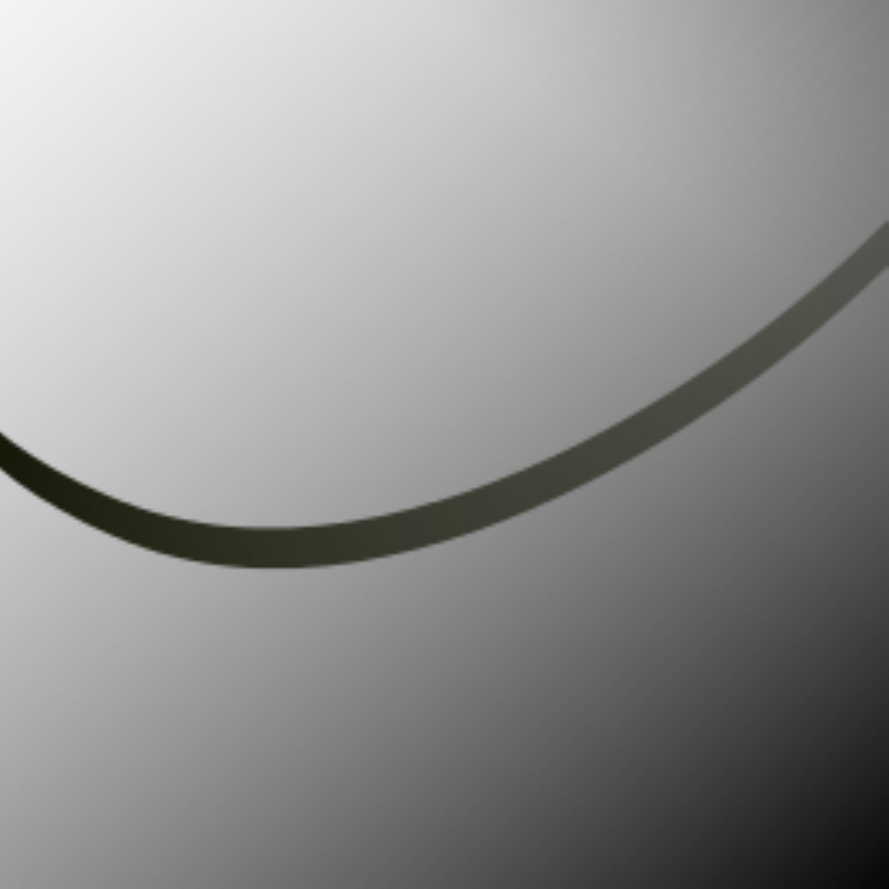} &
\includegraphics[width=0.231\linewidth]{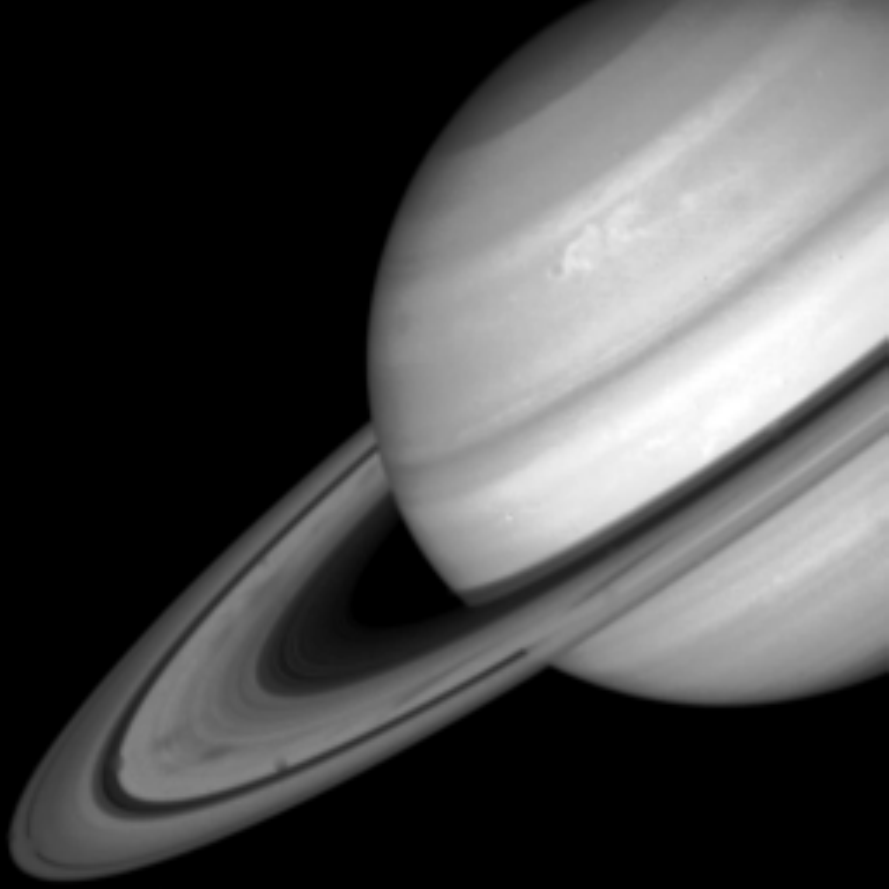} &
\includegraphics[width=0.231\linewidth]{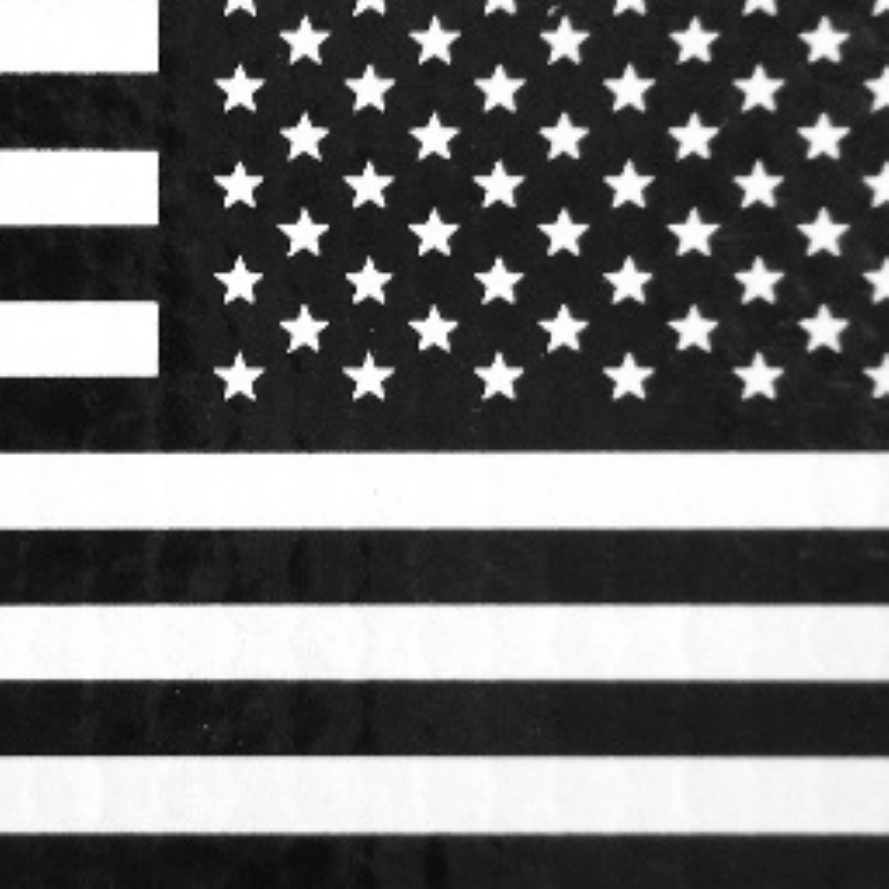} &
\includegraphics[width=0.231\linewidth]{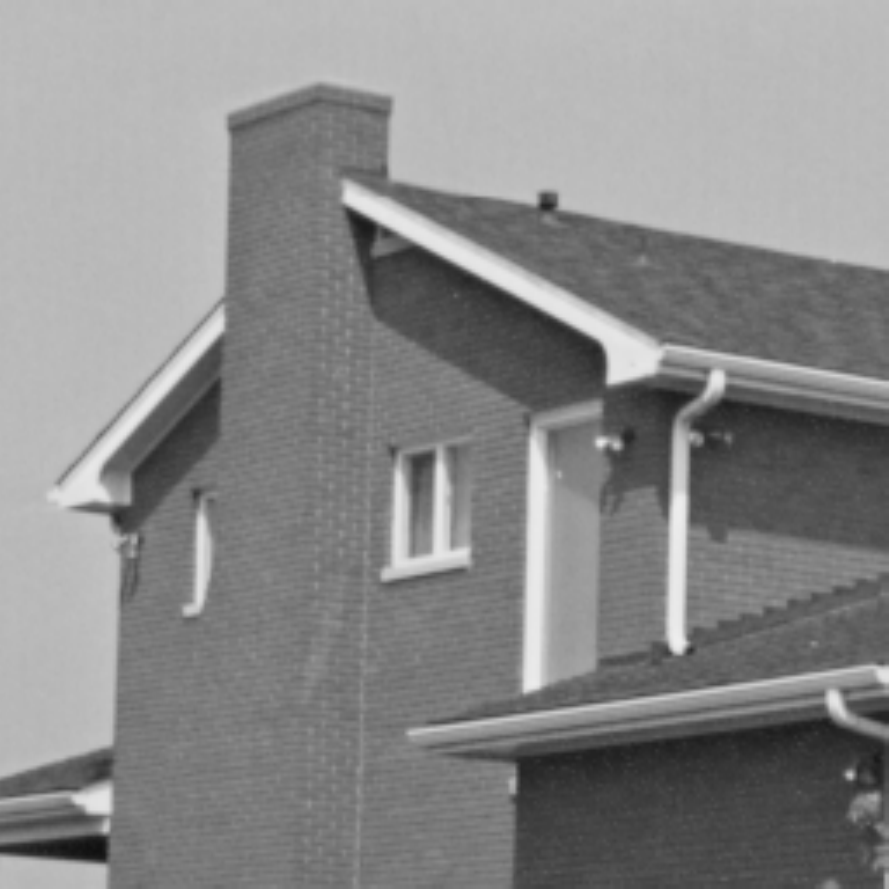} \\
Swoosh & Saturn & Flag & House \\
\includegraphics[width=0.231\linewidth]{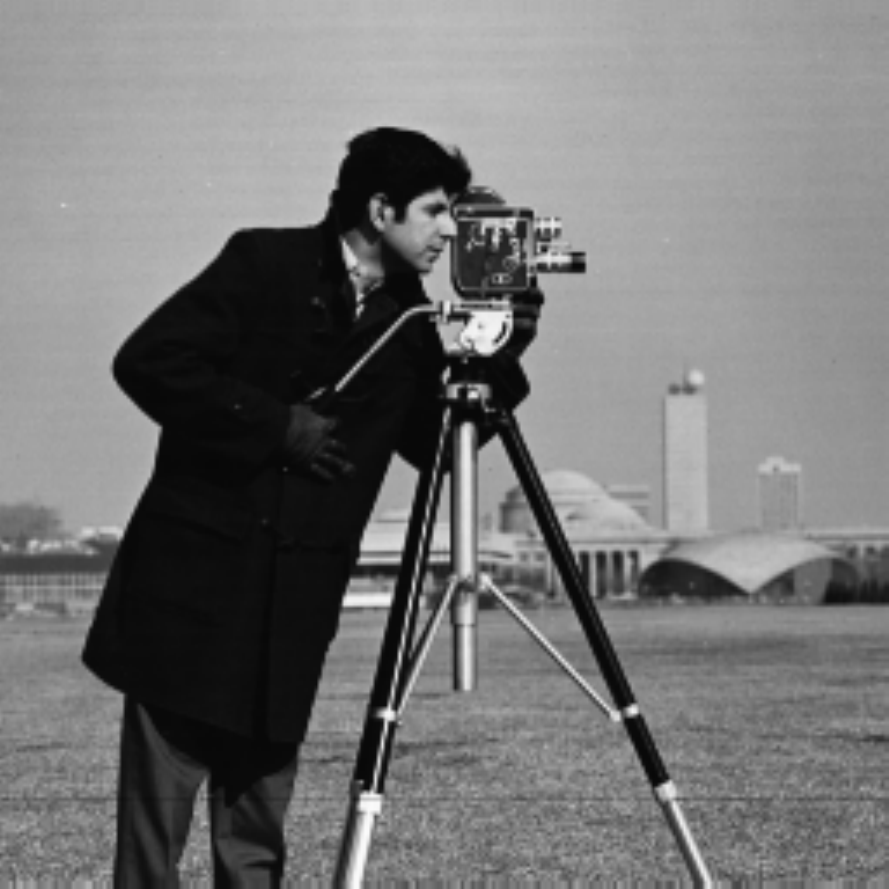} &
\includegraphics[width=0.231\linewidth]{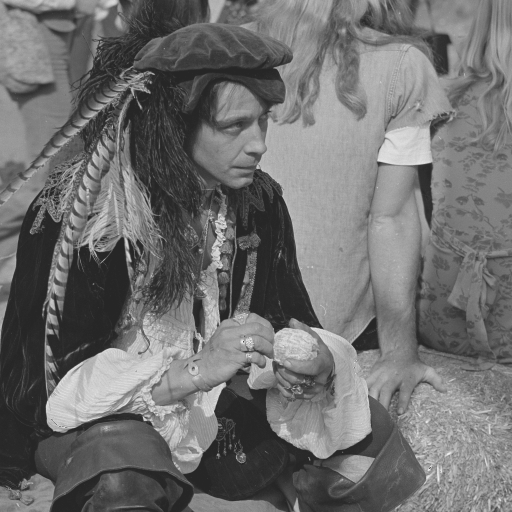} &
\includegraphics[width=0.231\linewidth]{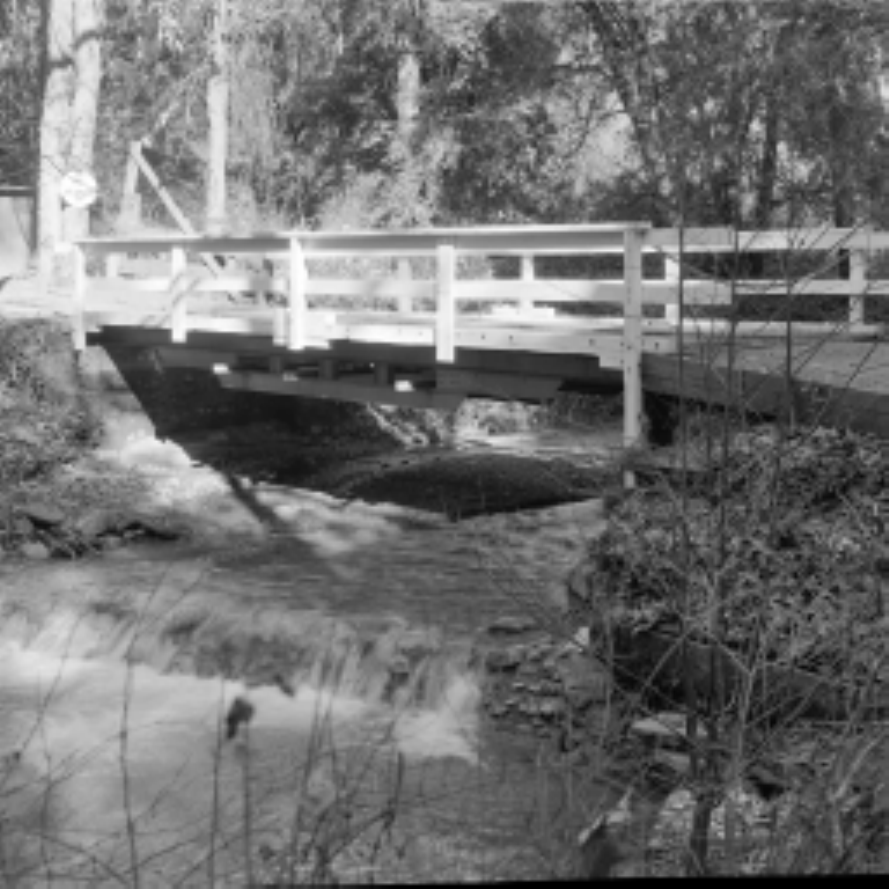} &
\includegraphics[width=0.231\linewidth]{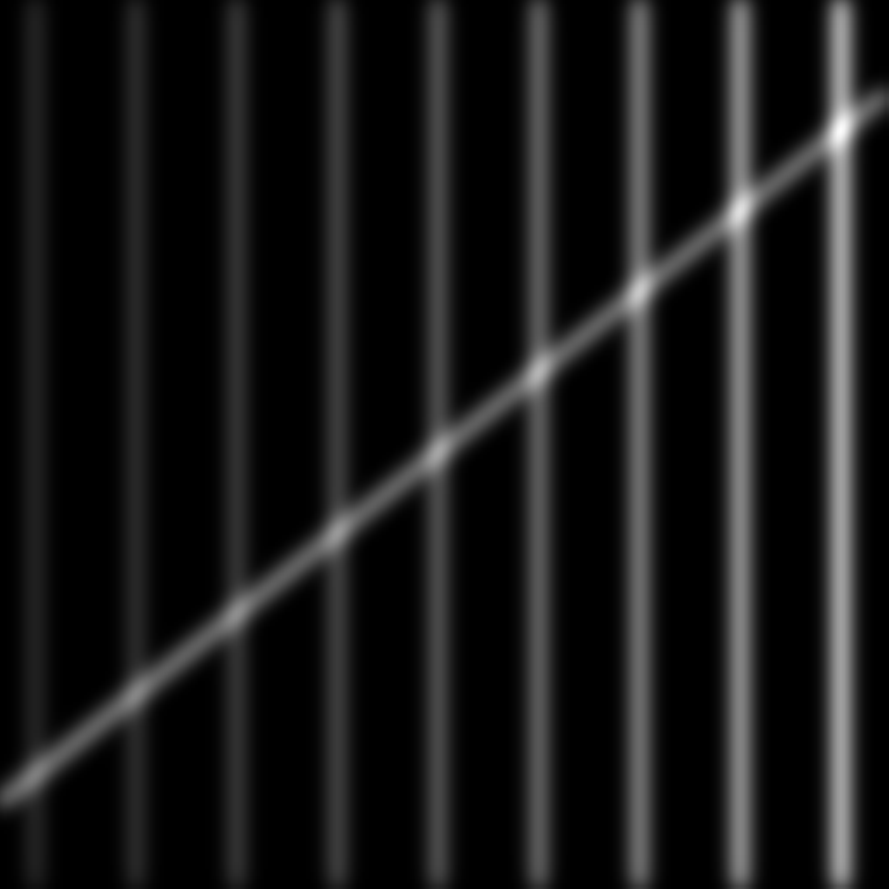} \\
Cameraman & Man & Bridge & Ridges 
\end{tabular}
\caption{Original images used for our simulations.}
\label{fig:original_noisy_images}
\end{figure}

We have first conducted comparisons of our method and several competing algorithms on simulated data. The images
we have used in the simulations are presented in Fig.~\ref{fig:original_noisy_images}.
We have considered the same noise level for the Saturn image (\lcf Fig.~\ref{fig:Saturn_peak=02}) 
as in \cite{Willett06}, where one can find
extensive comparisons with a variety of multiscale methods 
\cite{Kolaczyk99,Willett_Nowak03,Kolaczyk_Nowak04}.
%

In terms of PSNR, defined in the classical way (for 8-bit images)
\begin{align}
\PSNR(\wh{f}, f) &=
  10 \log_{10} \frac{255^2}{\frac{1}{M} { \displaystyle \sum_{i} (\wh{f}_i -
f_i)^2}},
\end{align}
our method globally improves upon other state-of-the-art methods such
as Poisson-NLM \cite{Deledalle_Denis_Tupin10}, SAFIR
\cite{Boulanger_Kervrann_Bouthemy_Elbau_Sibarita_Salamero10}, and Poisson
Multiscale Partitioning (PMP) \cite{Willett_Nowak03} for the very low
light levels of interest.  Moreover, visual artifacts tend to be reduced by our
Poisson NLPCA and NLSPCA, with respect to the version using an
Anscombe transform and classical PCA (\lcf AnscombeNLPCA in
Figs.~\ref{fig:Saturn_peak=02} and \ref{fig:Flag_peak=01} for
instance).  See Section~\ref{sec:compare} for more
details on the methods used for comparison.

All our results for 2D and 3D images are provided for both 
the NLPCA and NLSPCA using (except otherwise stated)
the parameter values summarized in Table~\ref{tab:parameters}.
The step-size parameter $\alpha_t$ for the NL-SPCA method is chosen via a selection rule initialized with the Barzilai-Borwein choice, 
as described in \cite{Harmany_Marcia_Willett12}.

\begin{table}[tb]
  \centering
  \begin{tabular}{@{} clc}
  \toprule
  Parameter & Definition & Value\\
  \midrule
  $N$			 & patch size & $20 \times 20$ \\
  $\ell$ & approximation rank & 4 \\
  $K$ & clusters & 14 \\
  $N_{\iter}$ & iteration limit & 20 \\
  $\varepsilon_{{\rm stop}}$ & stopping tolerance & $10^{-1}$ \\
  $\varepsilon_{\cond}$ & conditioning parameter & $10^{-3}$ \\
  \multirow{2}{*}{$\lambda$} & $\ell_1$ regularization
  & \multirow{2}{*}{$70\sqrt{\frac{\log(M_k)}{n}}$} \\
   & (NL-SPCA only) & \\
  \bottomrule
  \end{tabular}
  \caption{Parameter settings used in the proposed method. Note: $M_k$
  		is the number of patches in the $k$th cluster as determined by the 
		Bregman hard clustering step.}
  \label{tab:parameters}
\end{table}


\subsection{Simulated 3D data}

In this section we have tested a generalization of our algorithm for spectral
images. We have thus considered the NASA AVIRIS (Airborne Visible/Infrared Imaging
Spectrometer) Moffett Field reflectance data set,
and we have kept a $256 \times 256\times 128$ sized portion of the total data cube.
For the simulation we have used the same noise level as in \cite{Krishnamurthy_Raginsky_Willett10} 
(the number of photons per voxel is 0.0387), so that comparison could be done with the results
presented in this paper. Moreover to ease comparison  with earlier work, the performance
has been measured in terms of mean absolute error (MAE), defined by 
\begin{equation}
 \MAE(\wh{f}, f)=\frac{\|\wh{f}-f\|_1}{\|f\|_1}.
\end{equation}

We have performed the clustering on the 2D image obtained by summing the
photons on the third (spectral) dimension, and using this clustering
for each 3D patch. This approach is particularly well suited for low
photons counts since with other approaches the clustering step can be of poor quality.
Our approach provides an illustration of the importance of taking into
account the correlations across the channels.  We have used non-square
patches since the spectral image intensity has different levels of homogeneity across
the spectral and spatial dimensions. We thus have considered elongated
patches with respect to the third dimension. In practice, the patch
size used for the results presented is $5\times5 \times 23$, the
number of clusters is $K=30$, and the order of approximation is
$\ell=2$.

For the noise level considered, our proposed algorithm outperforms the
other methods, BM4D \cite{Maggioni_Katkovnik_Egiazarian_Foi11} and PMP \cite{Krishnamurthy_Raginsky_Willett10},
both visually and in term of MAE (\lcf Fig.~\ref{fig:moffet_close_up}). Again, these competing methods are
described in Section~\ref{sec:compare}.

\subsection{Real 3D data }

We have also used our method to denoise some real noisy astronomical data.
The last image we have considered is based on thermal X-ray emissions of the 
youngest supernova explosion ever observed. It is the supernova remnant G1.9+0.3  (\symbol{64} NASA/CXC/SAO) 
in the Milky Way.  The study of such spectral
images can provide important information about the nature of elements present in the early stages
of supernova. We refer to \cite{Borkowski_Reynolds_Green_Hwang_Petre_Krishnamurthy_Willett10} for
deeper insights on the implications  for astronomical science.
This dataset has an average of 0.0137 photons per voxel.

For this image we have also used the 128 first spectral channels, so the data cube is also of size 
$256 \times 256 \times128$. 
Our method removes some of the spurious artifacts generated by the method proposed in~\cite{Krishnamurthy_Raginsky_Willett10} and the blurry artifacts in BM4D~\cite{Maggioni_Katkovnik_Egiazarian_Foi11}.

\subsection{Comparison with other methods}
\label{sec:compare}
\subsubsection{Classical PCA  with Anscombe transform}
The approximation of the variance provided by the Anscombe transform
is reasonably accurate for intensities of three or more (\lcf
Fig.~\ref{fig:Anscombe_variance} and also \cite{Makitalo_Foi12}
Fig.~1-b). In practice this is also the regime where a well-optimized
method for Gaussian noise might be applied successfully using this
transform and the inverse provided in \cite{Makitalo_Foi11}.
\begin{figure}[t]
\centering
\subfigure{\includegraphics[width=0.8125\linewidth]{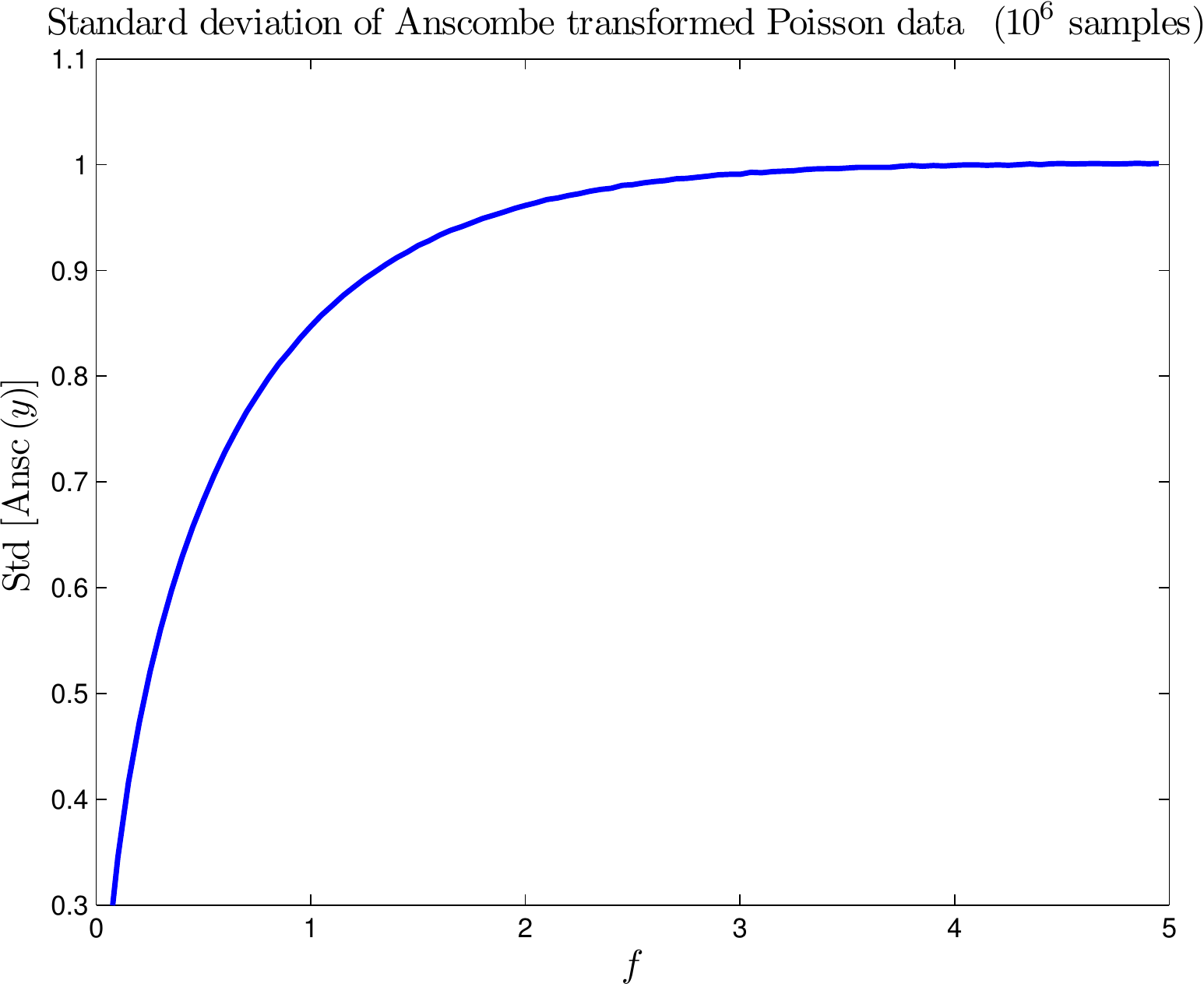}}
\caption{Standard deviation approximation of some simulated Poisson
  data, after performing the Anscombe transform (Ansc). For each true parameter $f$, $10^6$
Poisson realizations where drawn and the corresponding standard deviation is reported. }
\label{fig:Anscombe_variance}
\end{figure}

To compare the importance of fully taking advantage of the Poisson
model and not using the Anscombe transform, we have derived another algorithm, 
analogous to our Poisson NLPCA method but using Bregman divergences associated 
with the natural parameter of a Gaussian random variable instead of Poisson. 
It corresponds to an implementation similar to
the classical power method for computing PCA
\cite{Collins_Dasgupta_Schapire02}.  The function $L$ to be optimized
in \eqref{eq:original_poisson_KL} is simply replaced by the square loss
$\tilde{L}$,\vspace{-0.2cm}
\begin{equation}\label{eq:original_gaussian_KL}
\tilde{L}(U,V)= \sum_{i=1}^M \sum_{j=1}^N  \left( (UV)_{i,j} -Y_{i,j} \right)^2 \, . 
\end{equation}
For the Gaussian case, the following update equations are substituted for \eqref{eq:update_row_U_poisson} and \eqref{eq:update_row_V_poisson}

\begin{equation}\displaystyle \label{eq:update_row_U_gaussian}
U_{t+1,i,:}=U_{t,i,:} -((U_tV_t)_{i,:} -Y_{i,:})V_t^\top (V_t V_t^\top)^{-1} ~,\\
\end{equation}
and
\begin{multline}
V_{t+1,:,j}=\\
V_{t,:,j} -  (U_{t+1} ^\top U_{t+1} )^{-1} U_{t+1}^\top \left((U_{t+1}V_t)_{:,j} -Y_{:,j}\right) ~.
\end{multline}

An illustration of the improvement due to our direct modeling of Poisson noise instead of a simpler 
Anscombe (Gaussian) NLPCA approach is shown 
in our previous work \cite{Salmon_Deledalle_Willett_Harmany12} and the below simulation results. 
The gap is most noticeable at low signal-to-noise ratios, and high-frequency artifacts are more likely 
to appear when using the Anscombe transform.  To invert the Anscombe transform we have considered 
the function provided 
by \cite{Makitalo_Foi11}, and available at \url{http://www.cs.tut.fi/~foi/invansc/}.
This slightly improves the usual (closed form) inverse transformation, and in our work it is used for all 
the methods using the Anscombe transform (referred to as Anscombe-NLPCA in our experiments).

\subsubsection{Other methods }
We compare our method with other recent algorithms designed for retrieval of Poisson corrupted images. In
the case of 2D images we have compared with:
\begin{itemize}

\item NLBayes \cite{Lebrun_Colom_Buades_Morel12} using Anscombe transform and the refined inverse transform proposed in \cite{Makitalo_Foi11}.
\item SAFIR \cite{Kervrann_Boulanger06,Boulanger_Kervrann_Bouthemy_Elbau_Sibarita_Salamero10}, 
using Anscombe transform and the refined inverse transform proposed in \cite{Makitalo_Foi11}.
\item Poisson multiscale partitioning (PMP), introduced by Willett and
  Nowak \cite{Willett_Nowak03,Willett_Nowak04} using full cycle
  spinning.
We use the haarTIApprox function as available at \url{http://people.ee.duke.edu/~willett}.
\item BM3D \cite{Makitalo_Foi11} using Anscombe transform with a refined inverse transform. 
The online code is available at \url{http://www.cs.tut.fi/~foi/invansc/} and we used the 
default parameters provided by the authors. The version with binning and interpolation relies on
$3\times 3$ bins and bilinear interpolation.
\end{itemize}

In the case of spectral images we have compared our proposed method with 
\begin{itemize}
\item  BM4D \cite{Maggioni_Katkovnik_Egiazarian_Foi11} using 
the inverse Anscombe \cite{Makitalo_Foi11} already mentioned. We set the patch size to $4 \times 4 \times 16$,
since the patch length has to be dyadic for this algorithm.
\item Poisson multiscale partition (PMP for 3D images)
  \cite{Krishnamurthy_Raginsky_Willett10}, adapting the haarTIApprox
  algorithm to the case of spectral images. As in the reference
  mentioned, we have considered cycle spinning with 2000 shifts.
\end{itemize}

  For visual inspection of the qualitative performance of each approach,
  the results are displayed on Fig.~\ref{fig:Ridges_peak=01}-\ref{fig:chandra}. Quantitative performance  in terms
  of PSNR are given in Tab.~\ref{tab:optimal_bandwidth}.



\begin{figure*}[hbt!]
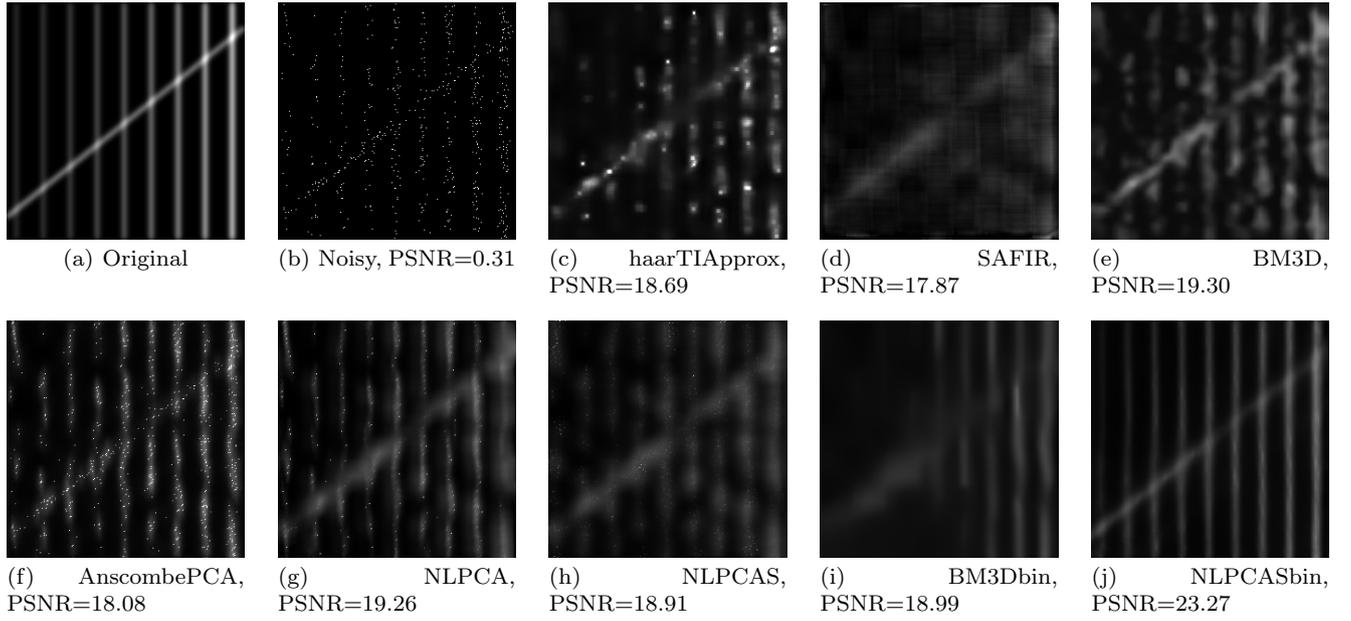

\centering
\input{Im_Original_01_Ridges_legend}  \subfigure[\currentcaption]{\includegraphics[width=0.1798\linewidth]{\currentname}}\hfill
\input{Im_Noisy_01_Ridges_legend}  \subfigure[\currentcaption]{\includegraphics[width=0.1798\linewidth]{\currentname}}\hfill
\input{Im_haarTIApprox_01_Ridges_legend}  \subfigure[\currentcaption]{\includegraphics[width=0.1798\linewidth]{\currentname}}\hfill
\input{Im_SAFIR_01_Ridges_legend}  \subfigure[\currentcaption]{\includegraphics[width=0.1798\linewidth]{\currentname}}\hfill
\input{Im_BM3D_01_Ridges_legend}  \subfigure[\currentcaption]{\includegraphics[width=0.1798\linewidth]{\currentname}}\hfill
\input{Im_AnscombePCA_01_Ridges_legend}  \subfigure[\currentcaption]{\includegraphics[width=0.1798\linewidth]{\currentname}}\hfill
\input{Im_NLPCA_01_Ridges_legend}  \subfigure[\currentcaption]{\includegraphics[width=0.1798\linewidth]{\currentname}}\hfill
\input{Im_NLPCAS_01_Ridges_legend}  \subfigure[\currentcaption]{\includegraphics[width=0.1798\linewidth]{\currentname}}\hfill
\input{Im_BM3Dbin_01_Ridges_legend}  \subfigure[\currentcaption]{\includegraphics[width=0.1798\linewidth]{\currentname}}\hfill
\input{Im_NLPCASbin_01_Ridges_legend}  \subfigure[\currentcaption]{\includegraphics[width=0.1798\linewidth]{\currentname}}\hfill
\caption{Toy cartoon image (Ridges) corrupted with Poisson noise with  Peak = 0.1.}
\label{fig:Ridges_peak=01}
\end{figure*}

\begin{figure*}[hbt!]
\centering
\input{Im_Original_1_Ridges_legend}  \subfigure[\currentcaption]{\includegraphics[width=0.1798\linewidth]{\currentname}}\hfill
\input{Im_Noisy_1_Ridges_legend}  \subfigure[\currentcaption]{\includegraphics[width=0.1798\linewidth]{\currentname}}\hfill
\input{Im_haarTIApprox_1_Ridges_legend}  \subfigure[\currentcaption]{\includegraphics[width=0.1798\linewidth]{\currentname}}\hfill
\input{Im_SAFIR_1_Ridges_legend}  \subfigure[\currentcaption]{\includegraphics[width=0.1798\linewidth]{\currentname}}\hfill
\input{Im_BM3D_1_Ridges_legend}  \subfigure[\currentcaption]{\includegraphics[width=0.1798\linewidth]{\currentname}}\hfill
\input{Im_AnscombePCA_1_Ridges_legend}  \subfigure[\currentcaption]{\includegraphics[width=0.1798\linewidth]{\currentname}}\hfill
\input{Im_NLPCA_1_Ridges_legend}  \subfigure[\currentcaption]{\includegraphics[width=0.1798\linewidth]{\currentname}}\hfill
\input{Im_NLPCAS_1_Ridges_legend}  \subfigure[\currentcaption]{\includegraphics[width=0.1798\linewidth]{\currentname}}\hfill
\input{Im_BM3Dbin_1_Ridges_legend}  \subfigure[\currentcaption]{\includegraphics[width=0.1798\linewidth]{\currentname}}\hfill
\input{Im_NLPCASbin_1_Ridges_legend}  \subfigure[\currentcaption]{\includegraphics[width=0.1798\linewidth]{\currentname}}\hfill
\caption{Toy cartoon image (Ridges) corrupted with Poisson noise with  Peak = 1.  }
\label{fig:Ridges_peak=1}
\end{figure*}

\begin{figure*}[hbt!]
\centering
\input{Im_Original_01_Flag_legend}  \subfigure[\currentcaption]{\includegraphics[width=0.1798\linewidth]{\currentname}}\hfill
\input{Im_Noisy_01_Flag_legend}  \subfigure[\currentcaption]{\includegraphics[width=0.1798\linewidth]{\currentname}}\hfill
\input{Im_haarTIApprox_01_Flag_legend}  \subfigure[\currentcaption]{\includegraphics[width=0.1798\linewidth]{\currentname}}\hfill
\input{Im_SAFIR_01_Flag_legend}  \subfigure[\currentcaption]{\includegraphics[width=0.1798\linewidth]{\currentname}}\hfill
\input{Im_BM3D_01_Flag_legend}  \subfigure[\currentcaption]{\includegraphics[width=0.1798\linewidth]{\currentname}}\hfill
\input{Im_AnscombePCA_01_Flag_legend}  \subfigure[\currentcaption]{\includegraphics[width=0.1798\linewidth]{\currentname}}\hfill
\input{Im_NLPCA_01_Flag_legend}  \subfigure[\currentcaption]{\includegraphics[width=0.1798\linewidth]{\currentname}}\hfill
\input{Im_NLPCAS_01_Flag_legend}  \subfigure[\currentcaption]{\includegraphics[width=0.1798\linewidth]{\currentname}}\hfill
\input{Im_BM3Dbin_01_Flag_legend}  \subfigure[\currentcaption]{\includegraphics[width=0.1798\linewidth]{\currentname}}\hfill
\input{Im_NLPCASbin_01_Flag_legend}  \subfigure[\currentcaption]{\includegraphics[width=0.1798\linewidth]{\currentname}}\hfill
\caption{Toy cartoon image (Flag) corrupted with Poisson noise with  Peak = 0.1.}
\label{fig:Flag_peak=01}
\end{figure*}

\begin{figure*}[hbt!]
\centering
\input{Im_Original_1_Flag_legend}  \subfigure[\currentcaption]{\includegraphics[width=0.1798\linewidth]{\currentname}}\hfill
\input{Im_Noisy_1_Flag_legend}  \subfigure[\currentcaption]{\includegraphics[width=0.1798\linewidth]{\currentname}}\hfill
\input{Im_haarTIApprox_1_Flag_legend}  \subfigure[\currentcaption]{\includegraphics[width=0.1798\linewidth]{\currentname}}\hfill
\input{Im_SAFIR_1_Flag_legend}  \subfigure[\currentcaption]{\includegraphics[width=0.1798\linewidth]{\currentname}}\hfill
\input{Im_BM3D_1_Flag_legend}  \subfigure[\currentcaption]{\includegraphics[width=0.1798\linewidth]{\currentname}}\hfill
\input{Im_AnscombePCA_1_Flag_legend}  \subfigure[\currentcaption]{\includegraphics[width=0.1798\linewidth]{\currentname}}\hfill
\input{Im_NLPCA_1_Flag_legend}  \subfigure[\currentcaption]{\includegraphics[width=0.1798\linewidth]{\currentname}}\hfill
\input{Im_NLPCAS_1_Flag_legend}  \subfigure[\currentcaption]{\includegraphics[width=0.1798\linewidth]{\currentname}}\hfill
\input{Im_BM3Dbin_1_Flag_legend}  \subfigure[\currentcaption]{\includegraphics[width=0.1798\linewidth]{\currentname}}\hfill
\input{Im_NLPCASbin_1_Flag_legend}  \subfigure[\currentcaption]{\includegraphics[width=0.1798\linewidth]{\currentname}}\hfill
\caption{Toy cartoon image (Flag) corrupted with Poisson noise with  Peak = 1.  }
\label{fig:Flag_peak=1}
\end{figure*}

\begin{figure*}[hbt!]
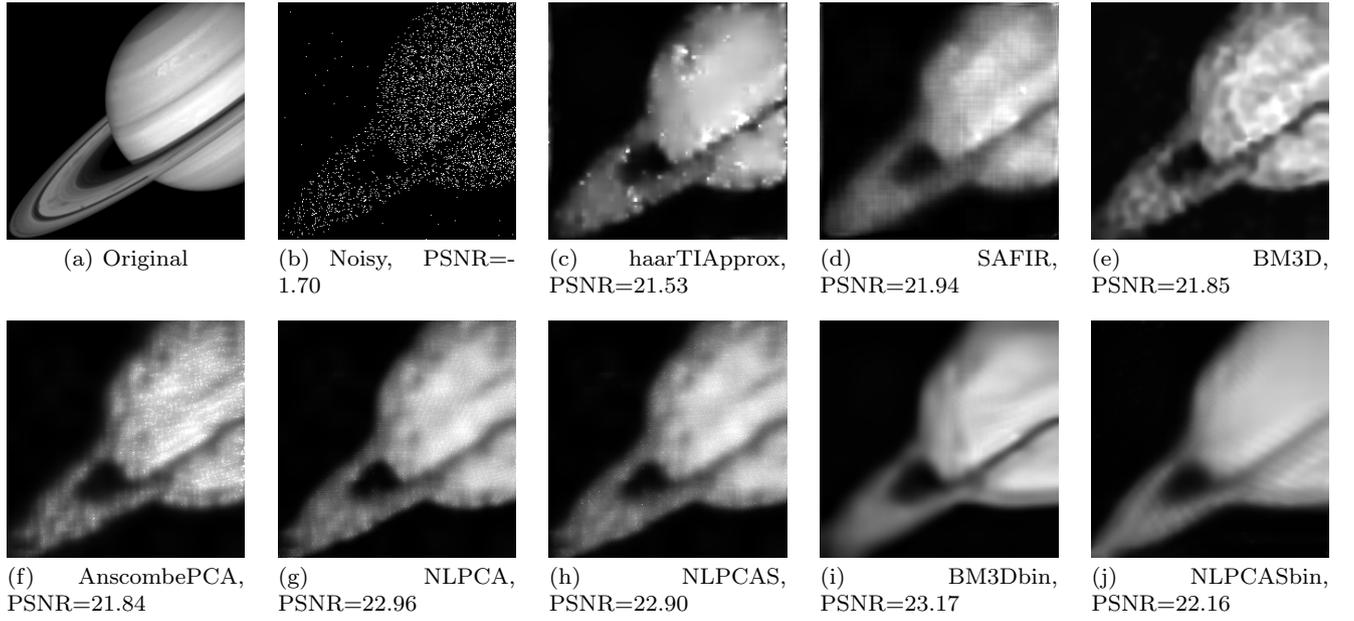

\centering
\input{Im_Original_02_Saturn_legend}  \subfigure[\currentcaption]{\includegraphics[width=0.1798\linewidth]{\currentname}}\hfill
\input{Im_Noisy_02_Saturn_legend}  \subfigure[\currentcaption]{\includegraphics[width=0.1798\linewidth]{\currentname}}\hfill
\input{Im_haarTIApprox_02_Saturn_legend}  \subfigure[\currentcaption]{\includegraphics[width=0.1798\linewidth]{\currentname}}\hfill
\input{Im_SAFIR_02_Saturn_legend}  \subfigure[\currentcaption]{\includegraphics[width=0.1798\linewidth]{\currentname}}\hfill
\input{Im_BM3D_02_Saturn_legend}  \subfigure[\currentcaption]{\includegraphics[width=0.1798\linewidth]{\currentname}}\hfill
\input{Im_AnscombePCA_02_Saturn_legend}  \subfigure[\currentcaption]{\includegraphics[width=0.1798\linewidth]{\currentname}}\hfill
\input{Im_NLPCA_02_Saturn_legend}  \subfigure[\currentcaption]{\includegraphics[width=0.1798\linewidth]{\currentname}}\hfill
\input{Im_NLPCAS_02_Saturn_legend}  \subfigure[\currentcaption]{\includegraphics[width=0.1798\linewidth]{\currentname}}\hfill
\input{Im_BM3Dbin_02_Saturn_legend}  \subfigure[\currentcaption]{\includegraphics[width=0.1798\linewidth]{\currentname}}\hfill
\input{Im_NLPCASbin_02_Saturn_legend}  \subfigure[\currentcaption]{\includegraphics[width=0.1798\linewidth]{\currentname}}\hfill
\caption{Toy cartoon image (Saturn) corrupted with Poisson noise with  Peak = 0.2. 
}
\label{fig:Saturn_peak=02}
\end{figure*}

\begin{table*}[htbp!]
\begin{center}
{\scriptsize
\begin{tabular}{lc@{\quad}c@{\quad}c@{\quad}c@{\quad}c@{\quad}c@{\quad}c@{\quad}c@{\quad}}
\toprule
Method 
& Swoosh
& Saturn
& Flag
& House
& Cam
& Man
& Bridge
& Ridges
\\
\midrule
& \multicolumn{8}{c}{ Peak $ = 0.1$}\\
\cmidrule{2-9} 
NLBayes 
& 11.08
& 12.65
& 7.14
& 10.94
& 10.54
& 11.52
& 10.58
& 15.97
\\
haarTIApprox 
& 19.84
& 19.36
& 12.72
& 18.15
& 17.18
& \textbf{19.10}
& 16.64
& 18.68
\\
SAFIR 
& 18.88
& 20.39
& 12.24
& 17.45
& 16.22
& 18.53
& 16.55
& 17.97
\\
BM3D 
& 17.21
& 19.13
& 13.12
& 16.63
& 15.75
& 17.24
& 15.72
& 19.47
\\
BM3Dbin 
& \textbf{21.91}
& \textbf{20.82}
& 14.36
& 18.39
& 17.11
& 18.84
& \textbf{16.94}
& 20.33
\\
NLPCA 
& 19.12
& 20.40
& 14.45
& 18.06
& 16.58
& 18.48
& 16.48
& 21.25
\\
NLSPCA 
& 19.18
& 20.45
& 14.50
& 18.08
& 16.64
& 18.49
& 16.52
& 20.56
\\
NLSPCAbin 
& 21.56
& 19.47
& \textbf{15.57}
& \textbf{18.68}
& \textbf{17.29}
& 18.73
& 16.90
& \textbf{23.52}
\\
\midrule
& \multicolumn{8}{c}{ Peak $ = 0.2$}\\
\cmidrule{2-9} 
NLBayes 
& 14.18
& 14.75
& 8.20
& 13.54
& 12.71
& 13.89
& 12.59
& 16.19
\\
haarTIApprox 
& 21.55
& 20.91
& 13.97
& 19.25
& \textbf{18.37}
& 20.13
& 17.46
& 20.46
\\
SAFIR 
& 20.86
& 21.71
& 13.65
& 18.83
& 17.38
& 19.88
& 17.41
& 18.58
\\
BM3D 
& 20.27
& 21.20
& 14.25
& 18.67
& 17.44
& 19.31
& 17.14
& 21.10
\\
BM3Dbin 
&\textbf{ 24.14}
& \textbf{22.59}
& 16.04
& \textbf{19.93}
& 18.24
& \textbf{20.22}
& \textbf{17.66}
& 23.92
\\
NLPCA 
& 21.20
& 22.29
& 16.53
& 19.08
& 17.80
& 19.69
& 17.49
& 24.10
\\
NLSPCA 
& 21.27
& 22.34
& 16.47
& 19.11
& 17.77
& 19.70
& 17.51
& 24.41
\\
NLSPCAbin 
& 24.04
& 20.56
&\textbf{16.65}
& 19.87
& 17.90
& 19.61
& 17.43
& \textbf{25.43}
\\
\midrule
& \multicolumn{8}{c}{ Peak $ = 0.5$}\\
\cmidrule{2-9} 
NLBayes 
& 19.60
& 18.28
& 10.19
& 17.01
& 15.68
& 16.90
& 15.11
& 16.77
\\
haarTIApprox 
& 23.59
& 23.27
& 16.25
& 20.65
& \textbf{19.59}
& 21.30
& 18.32
& 23.07
\\
SAFIR 
& 22.70
& 24.23
& 16.20
& 20.37
& 18.84
& 21.25
& 18.42
& 20.90
\\
BM3D 
& 23.53
& 24.09
& 15.94
& 20.50
& 18.86
& 21.03
& 18.37
& 23.33
\\
BM3Dbin 
& 26.20
& \textbf{25.64}
& 18.53
& \textbf{21.70}
& 19.58
& \textbf{21.60}
& \textbf{18.75}
& 27.99
\\
NLPCA 
& 24.50
& 25.38
& \textbf{18.93}
& 20.78
& 19.36
& 21.13
& 18.47
& \textbf{28.06}
\\
NLSPCA 
& 24.44
& 25.06
& 18.92
& 20.76
& 19.23
& 21.12
& 18.46
& 28.03
\\
NLSPCAbin 
& \textbf{26.36}
& 20.67
& 17.09
& 20.97
& 18.39
& 20.28
& 18.16
& 26.81
\\
\midrule
& \multicolumn{8}{c}{ Peak $ = 1$}\\
\cmidrule{2-9} 
NLBayes 
& 23.58
& 21.66
& 14.00
& 19.27
& 17.99
& 19.48
& 16.85
& 18.35
\\
haarTIApprox 
& 25.12
& 25.06
& 17.79
& 21.97
& \textbf{20.64}
& 22.25
& 19.08
& 24.52
\\
SAFIR 
& 23.37
& 25.14
& 17.91
& 21.46
& 20.01
& 22.08
& 19.12
& 24.67
\\
BM3D 
& 26.21
& 25.88
& 18.45
& 22.26
& 20.45
& 22.27
& 19.39
& 25.76
\\
BM3Dbin 
& \textbf{27.95}
& \textbf{27.24}
& 19.49
& \textbf{ 23.26}
& 20.61
& \textbf{22.53}
& \textbf{19.47}
& 29.91
\\
NLPCA 
& 26.99
& 27.08
& 20.23
& 22.07
& 20.31
& 21.96
& 19.01
& \textbf{30.17}
\\
NLSPCA 
& 27.02
& 27.04
& \textbf{20.37}
& 22.10
& 20.28
& 21.88
& 19.00
& 30.04
\\
NLSPCAbin 
& 27.21
& 21.10
& 17.03
& 21.21
& 18.45
& 20.37
& 18.36
& 26.96
\\
\midrule
& \multicolumn{8}{c}{ Peak $ = 2$}\\
\cmidrule{2-9} 
NLBayes 
& 27.50
& 24.66
& 17.13
& 21.10
& 19.67
& 21.34
& 18.22
& 21.04
\\	
haarTIApprox 
& 27.01
& 26.43
& 19.33
& 23.37
& 21.72
& 23.18
& 19.90
& 26.53
\\
SAFIR 
& 23.78
& 26.02
& 19.25
& 22.33
& 21.30
& 22.74
& 19.99
& 28.29
\\
BM3D 
& 28.63
& 27.70
& 20.66
& 24.25
& \textbf{22.19}
& \textbf{23.54}
& \textbf{20.44}
& 29.75
\\
BM3Dbin 
& \textbf{29.70}
& \textbf{28.68}
& 20.01
& \textbf{24.52}
& 21.42
& 23.43
& 20.17
& 32.24
\\
NLPCA 
& 29.41
& 28.02
& 20.64
& 23.44
& 20.75
& 22.78
& 19.37
& 32.25
\\
NLSPCA 
& 29.53
& 28.11
& \textbf{20.75}
& 23.75
& 20.76
& 22.86
& 19.45
& \textbf{32.35}
\\
NLSPCAbin 
& 27.62
& 21.13
& 17.02
& 21.42
& 18.33
& 20.34
& 18.34
& 29.31
\\
\midrule
& \multicolumn{8}{c}{ Peak $ = 4$}\\
\cmidrule{2-9} 
NLBayes 
& 31.17
& 26.73
& 22.64
& 23.61
& 22.32
& 23.02
& 19.60
& 24.04
\\
haarTIApprox 
& 28.55
& 28.13
& 21.16
& 24.88
& 22.93
& 24.23
& 20.83
& 28.56
\\
SAFIR 
& 25.40
& 27.40
& 20.71
& 23.76
& 22.73
& 23.85
& 20.88
& 30.52
\\
BM3D 
& 30.36
& 29.30
& \textbf{22.91}
& \textbf{26.08}
& \textbf{23.93}
& \textbf{24.79}
& \textbf{21.50}
& 32.50
\\
BM3Dbin 
& 31.15
&\textbf{ 30.07}
& 20.57
& 25.64
& 22.00
& 24.28
& 20.84
& 33.52
\\
NLPCA 
& 31.08
& 29.07
& 20.96
& 24.49
& 20.96
& 23.18
& 19.73
& \textbf{33.73}
\\
NLSPCA 
&\textbf{ 31.46}
& 29.51
& 21.15
& 24.89
& 21.08
& 23.41
& 20.15
& 33.69
\\
NLSPCAbin 
& 27.65
& 21.45
& 16.00
& 21.47
& 18.44
& 20.35
& 18.35
& 29.13
\\

\bottomrule
\end{tabular}
}
\caption{Experiments on simulated data (average over five noise realizations). Flag and Saturn images are displayed 
in Figs.~\ref{fig:Saturn_peak=02}, \ref{fig:Flag_peak=01} and \ref{fig:Flag_peak=1}, and the others are given in 
\cite{Salmon_Willett_AriasCastro12} and in
\cite{Zhang_Fadili_Starck08}.
\label{tab:optimal_bandwidth} 
} 
\end{center}
\end{table*}

\begin{figure*}
\centering
\subfigure[Original, channel 68 ]{\includegraphics[width=0.19\linewidth]{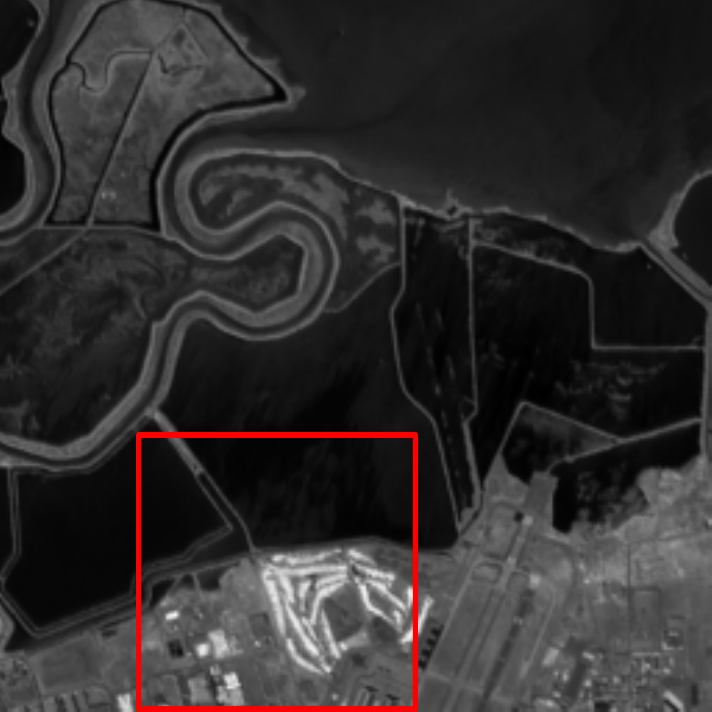}}
\hfill
\subfigure[Noisy data]{\includegraphics[width=0.19\linewidth]{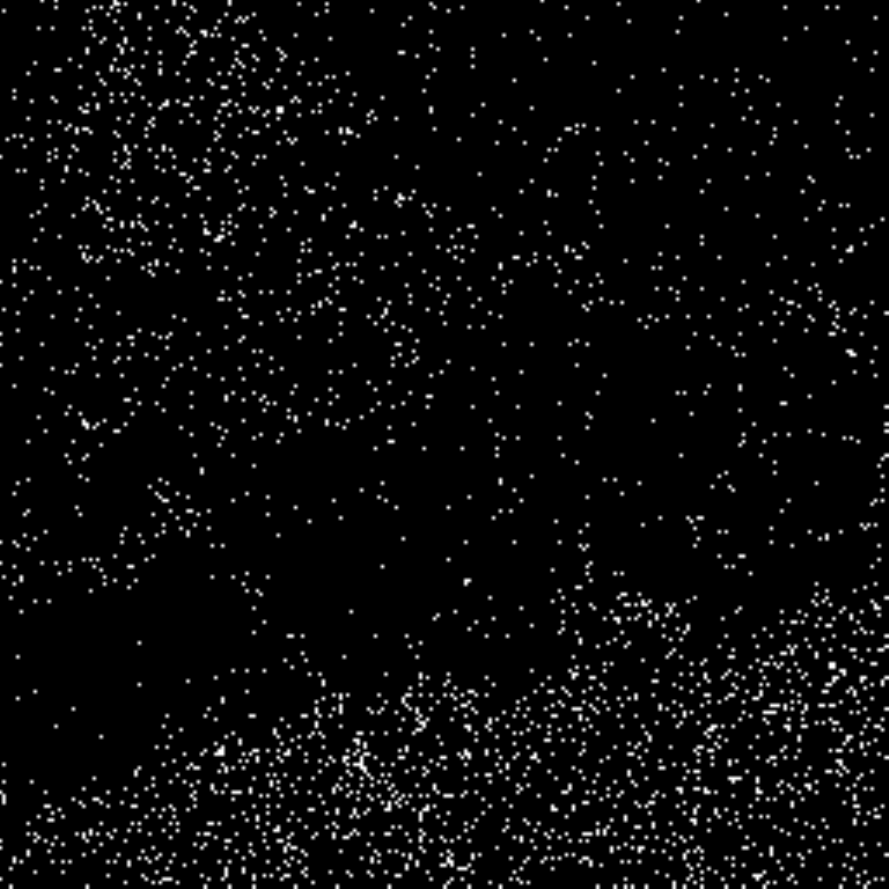}} 
\hfill
\subfigure[BM4D, $4 \times 4 \times 16$ MAE=0.2426]{\includegraphics[width=0.19\linewidth]{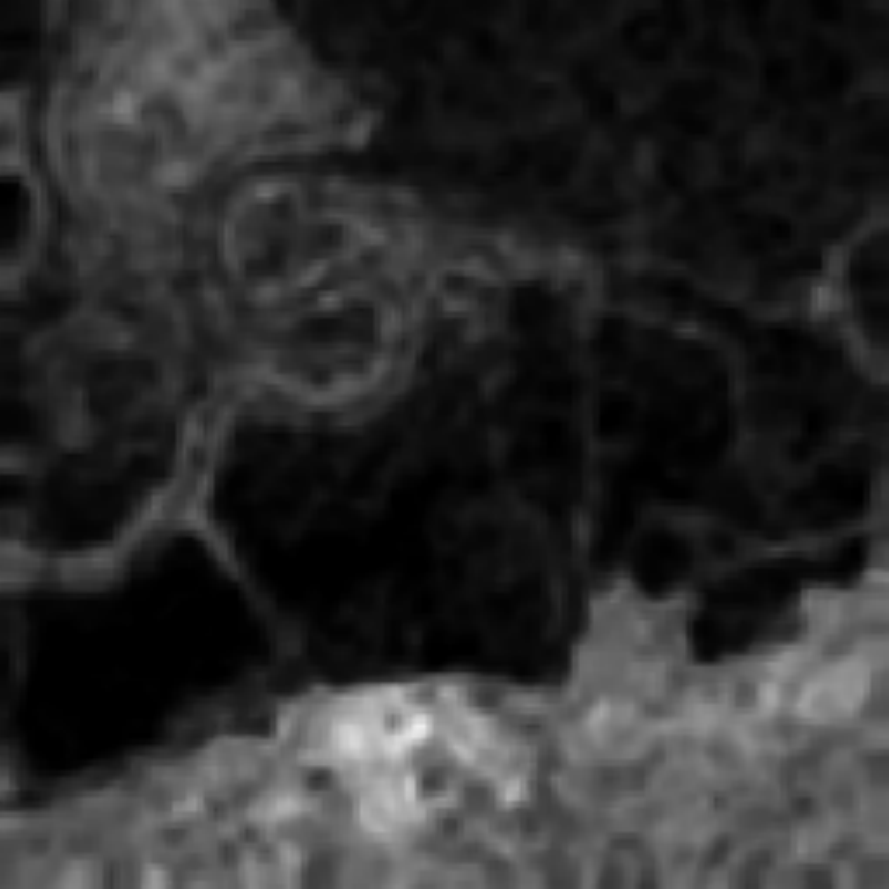}}
\hfill
\subfigure[Multiscale partition, MAE=0.1937]{\includegraphics[width=0.19\linewidth]{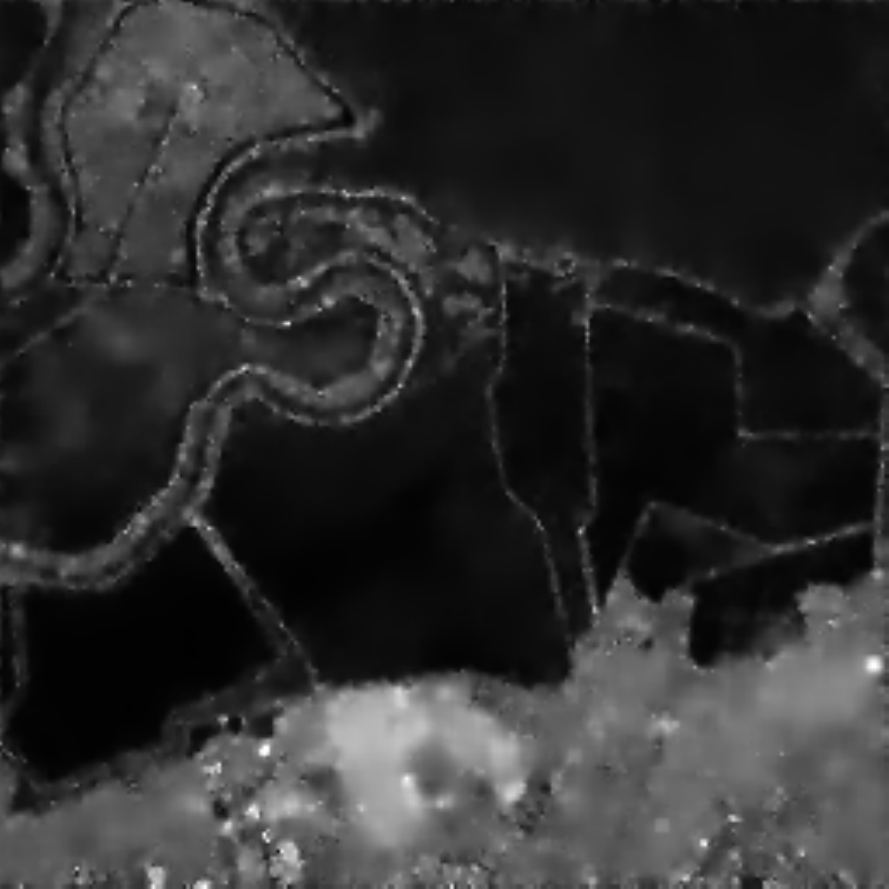}} 
\hfill
\subfigure[NLSPCA, $5 \times 5 \times 23$, MAE=0.1893]{\includegraphics[width=0.19\linewidth]{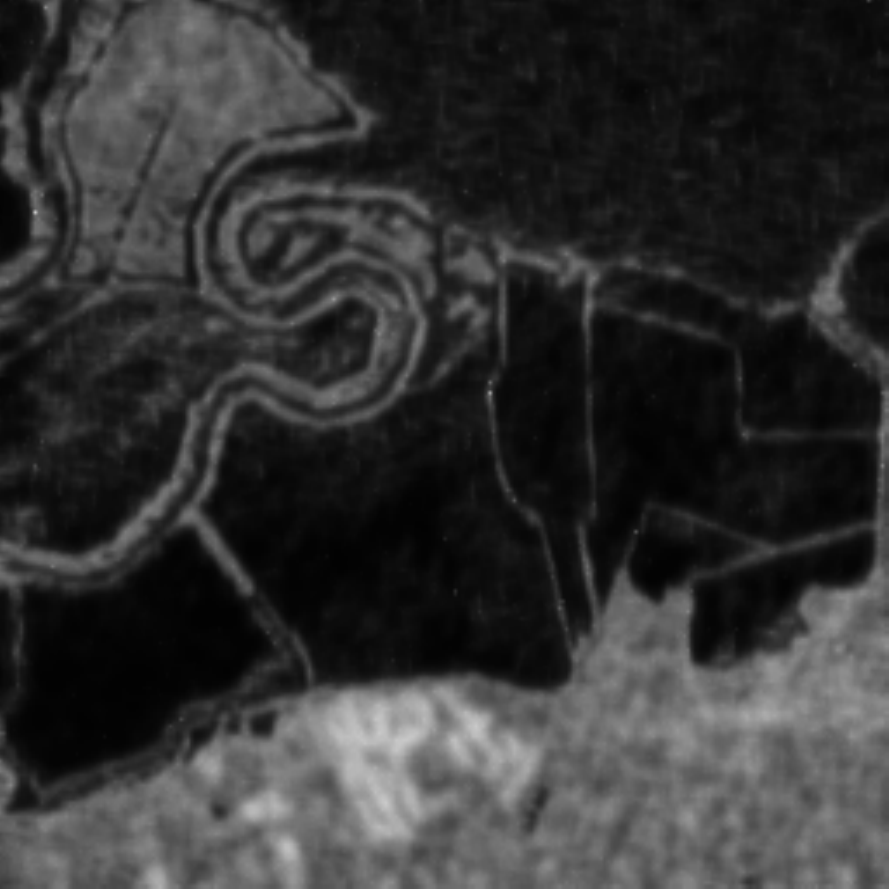}} 
%
%
\\
\subfigure[Original, channel 68]{\includegraphics[width=0.19\linewidth,clip,viewport=50 0 150 100]{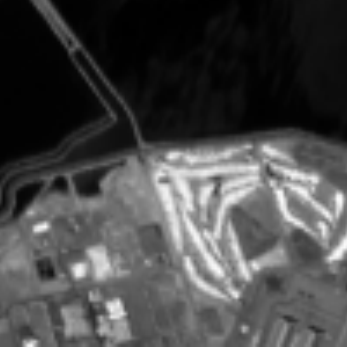}}
\hfill
\subfigure[Noisy data ]{\includegraphics[width=0.19\linewidth,clip,viewport=50 0 150 100]{noisy_compMoffettFieldpeak03396Patch_width7_68}} 
\hfill
%
%
\subfigure[BM4D, $4 \times 4 \times 16$, MAE=0.2426]{\includegraphics[width=0.19\linewidth,clip,viewport=50 0 150 100]{bm4d_compMoffettFieldkal_bm4d_68}}
\hfill
\subfigure[Multiscale partition, MAE=0.1937,]{\includegraphics[width=0.19\linewidth,clip,viewport=50 0 150 100]{fil_kal_compMoffettFieldkal_bm4d_68}} 
\hfill
\subfigure[NLSPCA, $5 \times 5 \times 23$, MAE=0.1893]{\includegraphics[width=0.19\linewidth,clip,viewport=50 0 150 100]{NLPCA_compMoffettFieldpeak03396Patch_width5Patch_width_3d23__68}} 
\caption{Original and close-up of the red square from spectral band 68 of the Moffett Field. 
The same methods are considered, and are displayed
 in the same order: original, noisy (with  0.0387 photons per voxels), 
BM4D \cite{Maggioni_Katkovnik_Egiazarian_Foi11} (with inverse Anscombe as in \cite{Makitalo_Foi11}),
multiscale partitioning method \cite{Krishnamurthy_Raginsky_Willett10}, and our proposed method with 
patches of size $5\times5\times 23$.
}
\label{fig:moffet_close_up}
\end{figure*}

\begin{figure*}
\centering
\subfigure[Noisy (channel 101) ]{\includegraphics[width=0.19\linewidth]{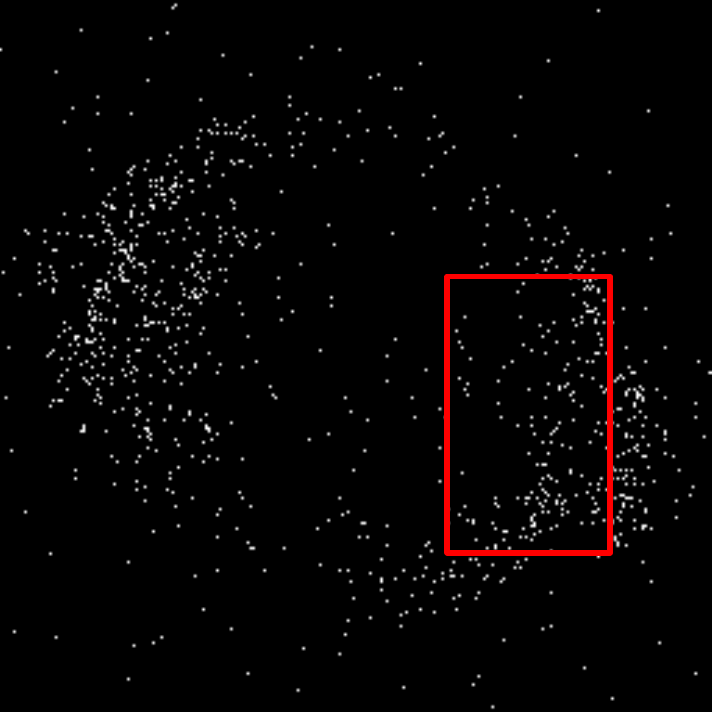}} 
\hfill
\subfigure[Average over channels]{\includegraphics[width=0.19\linewidth]{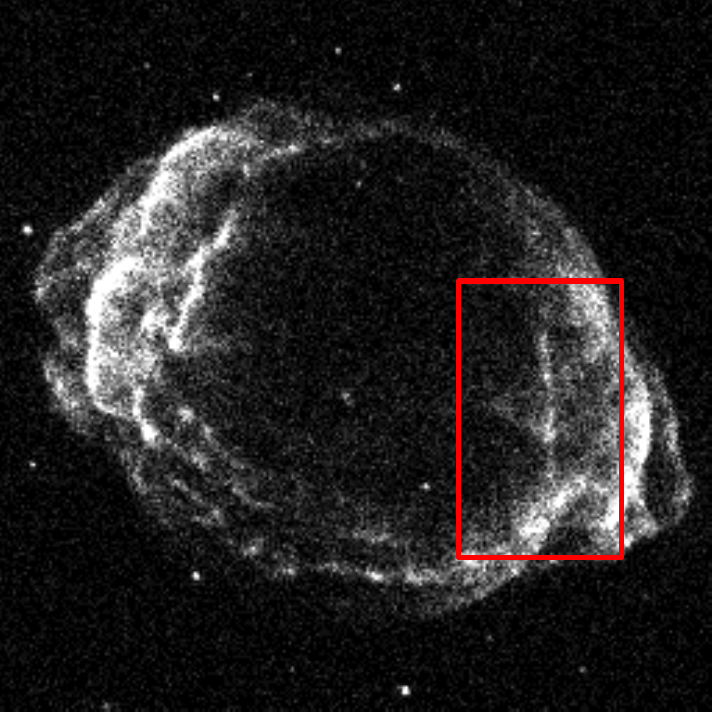}} 
\hfill
\subfigure[BM4D, $4 \times 4 \times 16$   ]{\includegraphics[width=0.19\linewidth]{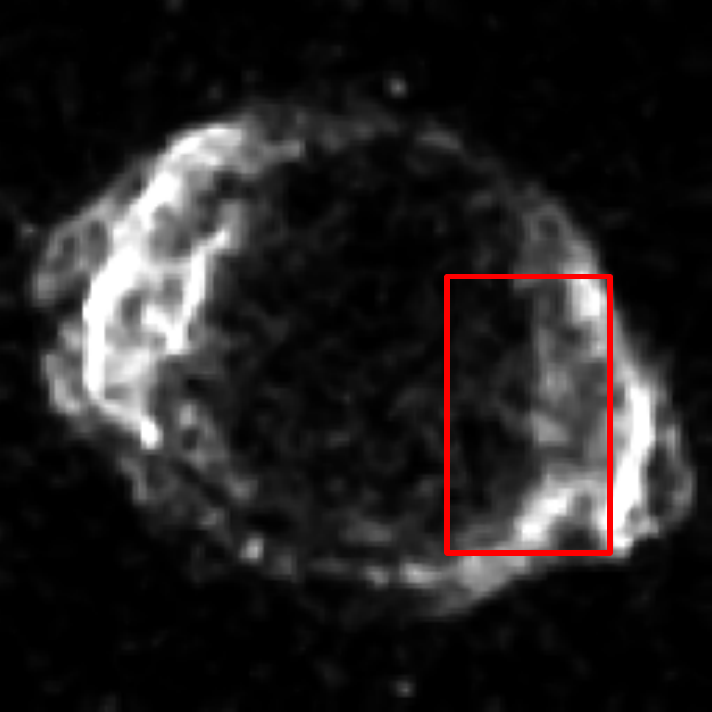}}
\hfill
\subfigure[Multiscale partition]{\includegraphics[width=0.19\linewidth]{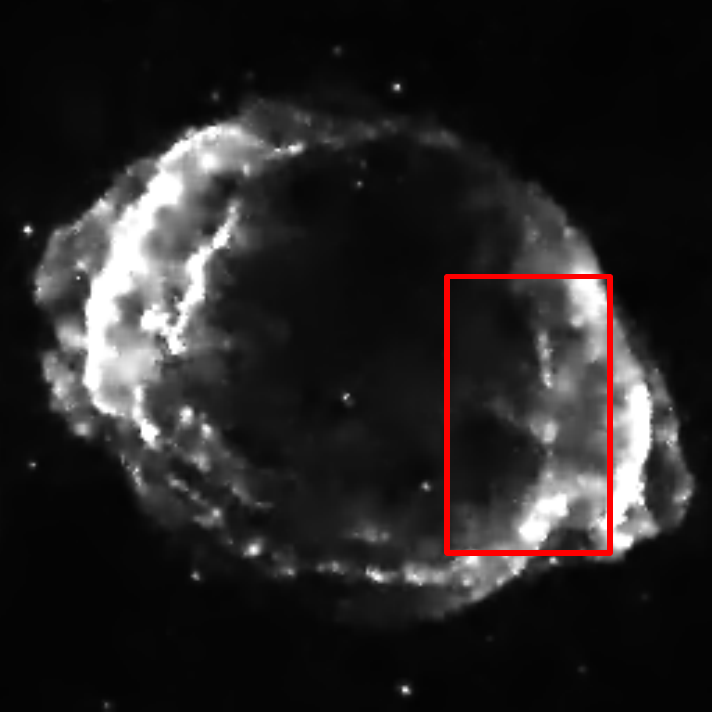}} 
\hfill
\subfigure[NLSPCA, $5 \times 5 \times 23$  ]{\includegraphics[width=0.19\linewidth]{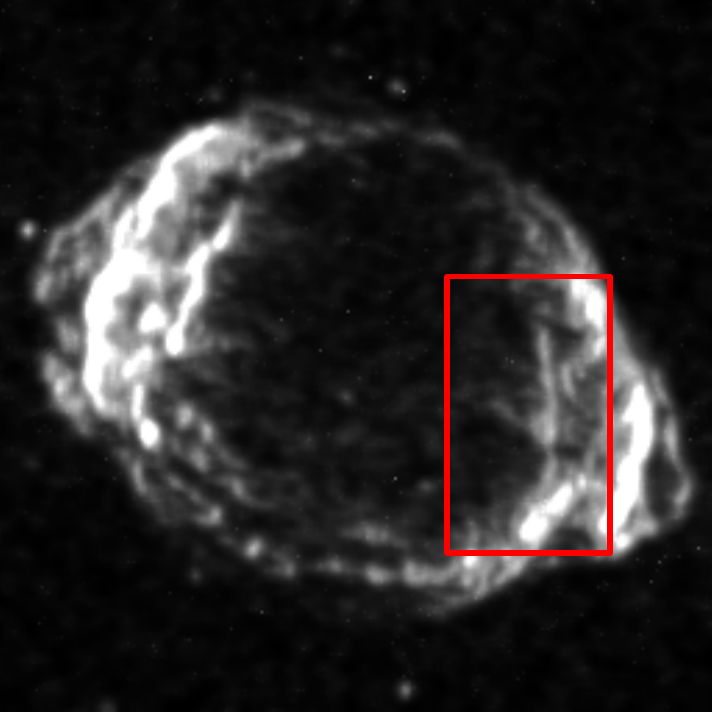}}

\caption{Spectral image of the supernova remnant G1.9+0.3. We display
  the spectral band 101 of the noisy observation (with 0.0137 photons
  per voxels), and this denoised channel with BM4D
  \cite{Maggioni_Katkovnik_Egiazarian_Foi11} (with inverse Anscombe as
  in \cite{Makitalo_Foi11}), the multiscale partitioning method
  \cite{Krishnamurthy_Raginsky_Willett10}, and our proposed method
  NLSPCA with patches of size $5\times5\times 23$.  Note how the highlighted detail shows structure in the
average over channels, which appears to be accurately reconstructed by our method.}
\label{fig:chandra}
\end{figure*}


\section{Conclusion and future work}\label{seq:conclusion}
Inspired by the methodology of \cite{Deledalle_Salmon_Dalalyan11} we
have adapted a generalization of the PCA
\cite{Collins_Dasgupta_Schapire02,Roy_Gordon_Thrun05} for denoising
images damaged by Poisson noise.  In general, our method finds a
good rank-$\ell$ approximation to each cluster of patches. While this can be done
either in the original pixel space or in a logarithmic ``natural parameter'' space, we
choose the logarithmic scale to avoid issues with nonnegativity, facilitating fast algorithms.  One might ask whether working on a
logarithmic scale impacts the accuracy of this rank-$\ell$
approximation.  Comparing against several state-of-the-art approaches,
we see that because our approach often works as well or better than these
alternatives, the exponential formulation of PCA does not lose significant
approximation power or else it would manifest itself in these results.

Possible improvements include adapting the number of dictionary
elements used with respect to the noise level, and proving a
theoretical convergence guarantees for the algorithm. The nonconvexity
of the objective may only allow convergence to local minima.
 An open question is whether these local minima have interesting properties. 
 Reducing the computational
complexity of NLPCA is a final remaining challenge.


\section*{Acknowledgments} {Joseph Salmon, Zachary Harmany, and
  Rebecca Willett gratefully acknowledge support from DARPA grant no.\
  FA8650-11-1-7150, AFOSR award no.\ FA9550-10-1-0390, and NSF award
  no.\ CCF-06-43947.  The authors would also like to thank
  J. Boulanger and C. Kervrann for providing their SAFIR algorithm,
  Steven Reynolds for providing the spectral images from the
  supernova remnant G1.9+0.3, and an anonymous reviewer for proposing 
  the improvement using the binning step.}

\section*{Appendix}

\appendix


\section{Biconvexity of loss function}
\begin{lemma}
 The function $L$ is biconvex with respect to $(U,V)$ but not jointly convex.
\end{lemma}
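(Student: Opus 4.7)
My plan is to split the claim into its two parts and treat them separately, since biconvexity follows from a structural observation while non-joint convexity only needs one concrete counterexample.

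For the biconvexity half, I would fix $V \in \R^{\ell \times N}$ and observe that the map $U \mapsto UV$ is linear, so each scalar entry $(UV)_{i,j}$ is an affine function of $U$. The Poisson loss decomposes entrywise as
\begin{equation*}
L(U,V) = \sum_{i,j} \bigl[ \exp((UV)_{i,j}) - Y_{i,j}(UV)_{i,j} \bigr],
\end{equation*}
so each summand is the composition of a convex function $t \mapsto e^t - Y_{i,j} t$ with the affine map $U \mapsto (UV)_{i,j}$. Convexity is preserved under affine precomposition and nonnegative summation, so $U \mapsto L(U,V)$ is convex. The argument for $V \mapsto L(U,V)$ with $U$ fixed is symmetric. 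Alternatively, one may just cite the explicit diagonal Hessian formulas for $H_U$ and $H_V$ already derived in Section~\ref{sec:newton}: both are diagonal with nonnegative entries $\sum_j \exp(UV)_{a,j} V_{b,j}^2$ and $\sum_i U_{i,a}^2 \exp(UV)_{i,b}$, hence positive semidefinite.

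For the failure of joint convexity, I would exhibit a small explicit counterexample in the scalar case $M = N = \ell = 1$, writing $L(u,v) = e^{uv} - y\, uv$ with some chosen $y$. A direct calculation gives
\begin{equation*}
\nabla^2 L(u,v) = \begin{pmatrix} v^2 e^{uv} & (1+uv)e^{uv} - y \\ (1+uv)e^{uv} - y & u^2 e^{uv} \end{pmatrix}.
\end{equation*}
Evaluating at $(u,v) = (0,0)$ with, say, $y = 0$ gives a Hessian with zero diagonal and off-diagonal entry $1$, whose determinant is $-1 < 0$; this Hessian is indefinite, so $L$ cannot be convex in a neighborhood of the origin. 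One then embeds this scalar counterexample into the general $M,N,\ell$ case by taking all but one row/column of $U$ and $V$ to be zero and all but one entry of $Y$ to be zero.

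I expect neither step to pose a real obstacle; the only mildly delicate point is choosing a counterexample compatible with the nonnegativity convention on $Y_{i,j}$ (Poisson observations are nonnegative integers), which is why I prefer the value $y=0$ over $y=-1$ or similar, so the counterexample remains in the admissible regime the paper cares about.
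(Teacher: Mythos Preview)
Your proposal is correct and follows essentially the same approach as the paper: the paper likewise declares biconvexity ``straightforward'' (your affine-precomposition argument makes this explicit) and establishes non-joint-convexity via the scalar case $M=N=\ell=1$, computing the same $2\times2$ Hessian and evaluating at the origin to find the eigenvalue $-1$. Your explicit choice $y=0$ is a welcome clarification, since the paper's stated value $H_L(0,0)=\begin{pmatrix}0&1\\1&0\end{pmatrix}$ tacitly assumes this as well.
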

\begin{proof} The biconvexity argument is straightforward; the partial functions 
$U \mapsto L(U,V)$ with a fixed $V$ and 
$V \mapsto L(U,V)$ with a fixed $U$  are both convex. The fact that the problem is non-jointly convex can be seen when $U$ and $V$ are in $\R$ (\ie $\ell=m=n=1$), since
the Hessian in this case is
\begin{equation*}
H_L(U,V) =
 \begin{pmatrix}
  V^2 e^{UV}  & UV e^{UV}+e^{UV}-Y\\
 UV e^{UV}+e^{UV}-Y  & U^2 e^{UV} 
   \end{pmatrix}.
\end{equation*}
Thus at the origin one has $H_L(0,0)= \begin{pmatrix}
  0 & 1\\
  1 & 0
   \end{pmatrix}$, which has  a negative eigenvalue, $-1$.
\end{proof}


\section{Gradient calculations}

We provide below the gradient computation used in Eq.~\eqref{eq:update_row_U_poisson} and Eq.~\eqref{eq:update_row_V_poisson}:
\begin{align*}
\nabla_U L(U,V)&=(\exp(UV) -Y)V^\top \,, \\ 
\nabla_V L(U,V)&=U^\top (\exp(UV) -Y) \,.
\end{align*}


Using the component-wise representation this is equivalent to 
\begin{align*}
\frac{\partial L(U,V)}{\partial U_{a,b}}&=\displaystyle\sum_{j=1}^N  \exp(UV)_{a,j} V_{b,j} - Y_{a,j} V_{b,j} \,, \\
\frac{\partial L(U,V)}{\partial V_{a,b}}&=\displaystyle\sum_{i=1}^M U_{i,a} \exp(UV)_{i,b} - U_{i,a}Y_{i,b} \,.
\end{align*}

\section{Hessian calculations} \label{app:hess}

The approach proposed by 
\cite{Gordon03,Roy_Gordon_Thrun05} 
consists in using an iterative algorithm which sequentially updates the $j$th column of $V$ and the $i$th row of $U$.
The only problem with this method is numerical: one needs to invert  possibly ill conditioned matrices at each step 
of the loop.

The Hessian matrices of our problems, with respect to $U $ and $V$ respectively are given by
\begin{equation*}
\frac{\partial^2 L(U,V)}{\partial U_{a,b} \partial U_{c,d}}=\left\{
    \begin{array}{ll}
        \displaystyle\sum_{j=1}^N  \exp(UV)_{a,j} V^2_{b,j}, & \mbox{if } (a,b)=(c,d), \\
        0 & \mbox{otherwise,}
    \end{array}
\right. 
\end{equation*}
and
\begin{equation*}
\frac{\partial^2 L(U,V)}{\partial V_{a,b} \partial V_{c,d}}=\left\{
    \begin{array}{ll}
        \displaystyle\sum_{i=1}^M  U^2_{i,a} \exp(UV)_{i,b}, &\mbox{if } (a,b)=(c,d), \\
        0&\mbox{otherwise.}
    \end{array}
\right. 
\end{equation*}
Notice that both Hessian matrices are diagonal. So applying the
inverse of the Hessian simply consists in inverting the diagonal
coefficients.

\section{The Newton step}\label{app:newton}

In the following we need to introduce the function 
$ \Vect_C$ that transforms a matrix into one single column (concatenates the columns), and the function $ \Vect_R$ that transforms
a matrix into a single row (concatenates the rows). This means that
\begin{align*}
\Vect_C: \, & \R^{M \times\ell} &\longrightarrow   & ~\R^{M \ell \times 1} \,, \nonumber\\
 &U=(U_{1,:}, \cdots,U_{\ell,:}) &\longmapsto  & ~(U_{1,:}^\top, \cdots,U_{\ell,:}^\top)^\top,
\end{align*}and 
\begin{align*}
\Vect_R:\, & \R^{\ell\times N} &\longrightarrow & ~\R^{ 1\times \ell  N} \,, \nonumber\\
 &V=(V_{:,1}^\top, \cdots,V_{:,\ell}^\top)^\top &\longmapsto & ~(V_{:,1}, \cdots,V_{:,\ell}) .
\end{align*}

Now using the previously introduced notations, the updating steps for $U$ and $V$ can  be written
\begin{align}
\Vect_C(U_{t+1})&=\Vect_C(U_t) -H_{U_t}^{-1} \Vect_C \big(\nabla_{U}L(U_t,V_t)\big) \,, \label{eq:vectcu}\\ 
\Vect_R(V_{t+1})&=\Vect_R(V_t) - \Vect_R \big(\nabla_{V}L(U_t,V_t)\big)H_{V_t}^{-1} \,.
\end{align}
We give the order used to concatenate the coefficients for the Hessian matrix  with respect to $U$, $H_U$: 
 $(a,b)=(1,1), \cdots,(M,1),(1,2),\cdots(M,2), \cdots (1,\ell), \cdots, (M,\ell)$.
 
\noindent We concatenate the column of $U$ in this order.

It is easy to give the updating rules for the $k$th column of $U$, 
one only needs to multiply the first Equation of \eqref{eq:vectcu} from the left by the $M \times M\ell$ matrix 
\begin{equation}
F_{k,M,\ell,} =
 \begin{pmatrix}
   0_{M,M}, & \cdots,& I_{M,M},&\cdots,& 0_{M,M} 
   \end{pmatrix}
\end{equation}
where the identity block matrix is in the $k$th position.
This leads to the following updating rule
\begin{equation}
{U_{t+1,\cdot,k}=U_{t,:,k} -D_k^{-1} (\exp(U_tV_t) -Y)V_{t,k,:}^\top } ~,
\end{equation} 
where $D_k$ is a diagonal matrix of size  $M\times M$:
\begin{equation*}\begin{split}D_k=\diag \Big(\displaystyle \sum_{j=1}^n    \exp(U_t V_t)_{1,j} &V^2_{t,k,j},\cdots,\\
&\displaystyle\sum_{j=1}^n\exp(U_tV_t)_{M,j} V^2_{t,k,j} \Big).\end{split}
\end{equation*}
This leads easily to \eqref{eq:update_row_U_poisson}.


By the symmetry of the problem in $U$ and $V$, one has the following equivalent updating rule for $V$:
\begin{equation}
{V_{t+1,k,:}=V_{t,k,:} -U_{t,:,k}^\top (\exp(U_tV_t) -Y) E_{k,M}^{-1}  } ~,
\end{equation}
where $E_k$ is a diagonal matrix of size $ N \times N$:
\begin{equation*}\begin{split}E_{k}=\diag \Big(\displaystyle\sum_{i=1}^M  \exp(U_t V_t)_{i,1} &U^2_{t,i,k},\cdots,\\
&\displaystyle\sum_{j=1}^n \exp(U_tV_t)_{i,n}   U^2_{t,i,k}  \Big) .\end{split}
\end{equation*}

\bibliographystyle{plain}
\bibliography{references_all}

\end{document}